\title{MixMax: Distributional Robustness in \\ Function Space via Optimal Data Mixtures}
\author{%
  Anvith Thudi \\
  Department of Computer Science\\
  University of Toronto and Vector Institute\\
  \texttt{anvith.thudi@mail.utoronto.ca} \\
  \And
  Chris J. Maddison \\
  Department of Computer Science\\
  University of Toronto and Vector Institute \\
  \texttt{cmaddis@cs.toronto.edu} \\
}
\theoremstyle{plain}
\newtheorem{theorem}{Theorem}[section]
\newtheorem{corollary}[theorem]{Corollary}
\newtheorem{fact}[theorem]{Fact}
\theoremstyle{definition}
\theoremstyle{remark}
\def\method{\text MixMax~}
\def\methodnospace{\text MixMax}
\begin{document}

\maketitle

\begin{abstract}
Machine learning models are often required to perform well across several pre-defined settings, such as a set of user groups. Worst-case performance is a common metric to capture this requirement, and is the objective of group distributionally robust optimization (group DRO). Unfortunately, these methods struggle when the loss is non-convex in the parameters, or the model class is non-parametric. Here, we make a classical move to address this: we reparameterize group DRO from parameter space to function space, which results in a number of advantages. First, we show that group DRO over the space of bounded functions admits a minimax theorem. Second, for cross-entropy and mean squared error, we show that the minimax optimal mixture distribution is the solution of a simple convex optimization problem. Thus, provided one is working with a model class of universal function approximators, group DRO can be solved by a convex optimization problem followed by a classical risk minimization problem. We call our method \methodnospace. In our experiments, we found that \method matched or outperformed the standard group DRO baselines, and in particular, \method improved the performance of XGBoost over the only baseline, data balancing, for variations of the ACSIncome and CelebA annotations datasets.~\footnote{This version of the paper contains an alternative proof of the group DRO minimax theorem (than the version originally published in ICLR 2025), which we hope makes the paper more accessible.}

\end{abstract}

\section{Introduction}

Machine learning pipelines often optimize for a model that performs well over several different distributions of data. This can be as the model will be deployed for different groups of users~\citep{strack2014impact, misc_covertype_31}, or in the development of foundation models for a suite of tasks. %
Optimizing for the worst-case error over the set of distributions captures this distributional robustness objective, and is called distributionally robust optimization (DRO)~\citep{dupavcova1987minimax, shapiro2002minimax, rahimian2019distributionally, oren2019distributionally, sagawa2019distributionally}. This is often called group DRO~\citep{sagawa2019distributionally} when the distribution set is finite.

Despite the importance of obtaining group DRO models, we lack effective methods for modern machine learning models. When our loss is convex in our model parameters, methods exist for solving the group DRO optimization~\citep{shapiro2002minimax}. But in the case of modern expressive non-linear models, only heuristic methods exist (such as applying methods from the convex literature)~\citep{xie2024doremi, sagawa2019distributionally}. Furthermore, these methods are expensive, unstable, and training on simple balanced mixtures of the data distributions sometimes leads to better group DRO solutions~\citep{idrissi2022simple}. Moreover, no group DRO methods (beyond balancing data) exist for non-parametric models. We turn to optimizing data mixtures as a possible solution.

Intuitively, mixing the distributions we want robustness over plays an important role in obtaining distributionally robust models. Past work has already shown that how one chooses to weigh sources of data in the loss function has a significant impact on the performance of the final model~\citep{sorscher2022beyond, xie2024doremi} across tasks, a key consideration for large language model training, \emph{e.g.}, Section 3.3 of~\citet{bi2024deepseek} and the survey by~\citet{albalak2024survey}. Other work has also already shown that in many cases, DRO solutions minimize a specific mixture~\citep{arjovsky2019invariant,slowik2021algorithmic,slowik2022distributionally} (existentially).
Yet, it is not known what the best data mixture for group DRO is. %
Minimax theorems provide one way of answering this, as they state a specific mixture distribution whose optima solve the DRO objective. However, as the loss functions over the parameters of commonly used model classes (like neural networks) are non-convex, convex minimax theorems do not apply. We address this with a classic trick: working over function space.

Modern model classes are able to approximate any function arbitrarily well. Hence, when viewed in function space, these model classes cover all the bounded functions. This subset is also the hypothesis space covering all the Bayes optimal functions (for classification and bounded regression). So we can expect that the group DRO solution is within this subset for modern model classes. Hence, in this paper we study the group DRO problem over the set of bounded functions.

Highlighting the benefits of this, we show that the space of bounded functions admits a minimax theorem. %
That is, if we can fit any distribution to Bayes optimal, then fitting to a mixture with the highest Bayes error solves the group DRO problem. To obtain this minimax result we bypassed issues with $L^p$ metrics, which may be of independent interest. 

Towards now finding this group DRO solving mixture, we show that these optimal mixture distributions \emph{maximize a concave objective function} over the mixture weights. This result is specific to using cross-entropy and $\ell_2^2$ losses, and leverages the structure of their Bayes optimal functions. We further show that one can compute the gradients for this objective given the optimal predictor for each component distribution. Hence, with this gradient oracle, we can use a stochastic entropic mirror ascent algorithm to maximize the objective. We call this selection of mixture weights \emph{\methodnospace}. In practice we do not have the optimal predictor for the gradient oracle, but we show experimentally several empirical gradient estimators still lead to near optimal mixture weights.

Beyond the guarantees when fitting near optimal, \method has a number of advantages for group DRO. %
First, given a sufficiently large set of data from each component distribution, finding \method weights can be accomplished by fitting a separate model on each source---the same amount of training compute as training one model on all the data. Furthermore, because the weights can be used to ensemble the component models, there is little additional model training overhead.
Moreover, unlike previous methods for group DRO, \method can be used with non-parametric model classes, like gradient boosting~\citep{friedman2001greedy}~\footnote{To the best of our knowledge, there exists no past work on group DRO for non-parametric learning algorithms, with the only work on DRO in general being for k-nearest neighbours where the set of distributions forms a Wasserstein ball~\citep{chen2019selecting}.}.

To illustrate the empirical performance of \method when we cannot ensure optimal fitting, we applied it for two real-world model classes. First, we tested \method with transformer models on synthetic Markov chain data. Second, we tested \method with XGBoost~\citep{chen2016xgboost} on several tabular datasets with different group shifts~\citep{ding2021retiring,liu2015faceattributes}. In all cases, we found that empirical versions of \method matched or outperformed applicable baseline methods when improvement was possible. In particular, when a moderate label shift was present, \method yielded relative test accuracy improvements between $2.3-5.9\%$ for XGBoost on variations of ACSIncome~\citep{ding2021retiring} and CelebA annotations~\citep{liu2015faceattributes}. Our contributions are:

\begin{enumerate}
    \item A minimax theorem for DRO over bounded functions.
    \item Showing that applied to cross-entropy and $\ell_2^2$, this yields a concave objective to maximize for data mixing (to solve group DRO) which we call \method
    \item Experiments showing empirical versions of \method improved over group DRO alternatives
    \item Providing the first group DRO method applicable to non-parametric learning, and applying it to XGBoost to improve over the baseline of balancing data by upsampling.
\end{enumerate}

\begin{figure}[]
\centering
    \centering
    \includegraphics[scale = 0.55]{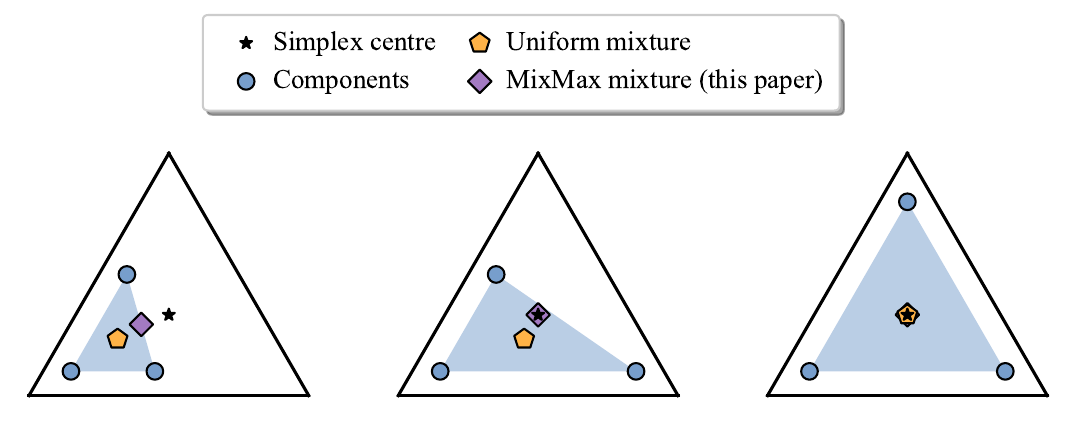}
\caption{\method for classification picks the label probability that maximizes entropy (is closest to the centre of the simplex) in the convex hull of the distributions. We illustrate the label probabilities given by \method compared to balancing the distributions when there is only one input and the objective is to minimize worst-case cross-entropy loss. }
\label{fig:intuition}
\end{figure}

\section{Preliminaries}

In this paper we will consider solving group DRO when we have model classes that are sufficiently expressive to capture the Bayes optimal function for any distribution we are dealing with. This will be relaxed in Section~\ref{ssec:emp_mixmax}, but for now this formally means we can (and will) reparameterize our optimization to be over a generic function space.

We work with functions $f : \mathcal{X} \to \mathcal{O}$ from an input domain $\mathcal{X}$ (with a measure $dx$) to a closed convex output domain $\mathcal{O} \subset \mathbb{R}^n$ that have bounded $L^{\infty}$ norm in each output coordinate: we denote this function space by $L^{\infty}(\mathcal{X},\mathcal{O})$. For example, when doing classification $\mathcal{O}$ could be the probability simplex, with the i'th entry representing the probability of label $i$. %
We will specifically consider the set of functions with outputs bounded by some fixed value $r$ in every coordinate (a.e.), denoted by $B_{L^{\infty}(\mathcal{X},\mathcal{O})}(r)$. We work with data $(x,y)$ where $x \in \mathcal{X}$ and $y \in \mathcal{Y}$, and an associated loss function $\mathcal{L}(o,y):\mathcal{O} \times \mathcal{Y} \rightarrow \mathbb{R}^+$ that is convex in the first argument and continuous in both. An example is cross-entropy where $o$ is the label probabilities, and $y$ is a specific label. With $\mathcal{L}$ we have the expected loss of a function $f \in L^{\infty}(\mathcal{X},\mathcal{Y})$ over a distribution $dp$ on $\mathcal{X} \times \mathcal{Y}$ is $\int_{\mathcal{X}\times\mathcal{Y}} \mathcal{L}(f(x),y) dp(x,y)$. Given a set of distributions $P$, the DRO objective we study is (which is group DRO when $P$ is finite)

\begin{equation}
\label{eq:DRO}
    \inf_{f \in B_{L^{\infty}}(r)} \sup_{dp \in P} \int \mathcal{L}(f(x),y) dp(x,y).
    \tag{DRO}
\end{equation}

\section{DRO Over Bounded Functions is Solved by Data Mixing}
\label{sec:analysis}

We now show that, letting our model class be all bounded functions, fitting to an optimal data mixture returns a function that solves group DRO (Corollary~\ref{cor:param_DS}). This is done by observing group DRO optimization over all bounded functions has a sufficient amount of regularity under the right topology (i.e., metric). We further show that this optimal mixture is characterized by having the highest Bayes error. Later in Section~\ref{sec:method} we demonstrate how to optimize for this mixture in the case of cross-entropy and $\ell_2^{2}$ loss.

First, Theorem 3.1 formally states that there exists a minimizer to the hardest distribution which solves ~\ref{eq:DRO}. To show this we overcome challenges in applying Sion’s minimax theorem to bounded functions, in particular, the fact that the set is not compact in any $L^p$ topology. Our main insight is to observe we only need to show the set of distributions has a topology which makes it compact, as our loss function is convex-concave (and not quasi-convex and quasi-concave which is the conditions for the typical Sion's minimax theorem). We show we can satisfy this with mild assumptions. For example, we will require that $dx$ is a $\sigma$-finite measure to then use Radon-Nikodym derivatives to define our topology, which is satisfied if $dx$ is Lebesque measure on $R^n$ or counting measure on some countable set.  A complete proof is provided in Appendix~\ref{app:dro_ds_proof}

\begin{theorem}[DRO over $L^{\infty}$ = DM]
\label{thm:dro_ds}
    Let $P$ be a finite set of probability distributions $dp$ on the product space $\mathcal{X} \times \mathcal{Y}$ with $\mathcal{Y} \subset \mathbb{R}^n$ for some $n$, such that $\forall dp \in P$, $dp(x)$ is absolutely continuous w.r.t a given $\sigma$-finite measure $dx$ on $\mathcal X$. Let $\mathcal{O} \subset \mathbb{R}^m$ be a closed convex set, and $L^{\infty}(\mathcal{X},\mathcal{O})$ be defined w.r.t the measure $dx$, and $B_{L^{\infty}(\mathcal{X},\mathcal{O})}(r) = \{f \in L^{\infty}(\mathcal{X},\mathcal{O}): ||f||_{\infty} \leq r \} $.

    Let the loss function $\mathcal{L}(o,y)$ be continuous in both arguments, and convex in the first argument. Furthermore assume $\mathcal{L}$ is bounded by some constant $M$ on $\mathcal{O} \times \mathcal{Y}$. If $dp_{\lambda}$ realizes 
    
    $$\sup_{dp \in Conv(P)} \inf_{f \in B_{L^{\infty}(\mathcal{X},\mathcal{O})}(r)} \int \mathcal{L}(f(x),y) dp(x,y), $$
    
    then there exists a minimizer $f_{\lambda}$ of the expected loss under $dp_{\lambda}$ that also realizes the~\ref{eq:DRO} objective 
    
    $$\inf_{f \in B_{L^{\infty}(\mathcal{X},\mathcal{O})}(r)} \sup_{dp \in P} \int \mathcal{L}(f(x),y) dp(x,y).$$

    That is, the~\ref{eq:DRO} objective is solved by fitting a specific mixture distribution over $P$.

\end{theorem}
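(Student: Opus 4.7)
The plan is to apply Sion's minimax theorem to the functional
\[
F(p, f) := \int_{\mathcal{X} \times \mathcal{Y}} \mathcal{L}(f(x), y) \, dp(x, y)
\]
over $\mathrm{Conv}(P) \times B_{L^{\infty}(\mathcal{X}, \mathcal{O})}(r)$, obtain the equality $\sup_p \inf_f F = \inf_f \sup_p F$, and use the assumed achievers $dp_\lambda$ and $f^*$ to extract a saddle point. Linearity of $F$ in $p$ already gives $\sup_{p \in \mathrm{Conv}(P)} F(p,f) = \sup_{p \in P} F(p,f)$, so the $\inf$-$\sup$ side agrees with the original DRO objective stated over $P$.

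To set up Sion's, I would topologize $L^{\infty}(\mathcal{X}, \mathbb{R}^m)$ as the dual of $L^{1}(\mathcal{X}; \mathbb{R}^m)$ (using $\sigma$-finiteness of $dx$), giving weak-$*$ compactness of the norm ball $B_{L^{\infty}}(r)$ by Banach--Alaoglu. Since $\mathcal{O}$ is closed and convex, it is an intersection of closed half-spaces via Hahn--Banach, and the pointwise-a.e. constraint $f(x) \in \mathcal{O}$ carves out a convex weak-$*$ closed subset, so $B_{L^{\infty}(\mathcal{X}, \mathcal{O})}(r)$ remains weak-$*$ compact and convex. The domain $\mathrm{Conv}(P)$ on the other side is convex by construction and needs no compactness.

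For the semicontinuity and quasi-convexity/concavity conditions, $p \mapsto F(p, f)$ is linear (hence quasi-concave) and upper semicontinuous under the natural weak topology on measures when tested against the bounded integrand $\mathcal{L}(f(\cdot), \cdot)$. The map $f \mapsto F(p, f)$ is convex in $f$ by convexity of $\mathcal{L}$ in its first argument, and strongly continuous on $L^{\infty}$ by dominated convergence (using the uniform bound $M$ on $\mathcal{L}$ and continuity in the first slot). The key remaining step is to promote strong continuity to weak-$*$ lower semicontinuity on the ball: for a convex functional this reduces to showing its norm-closed convex sublevel sets are weak-$*$ closed on bounded sets, which I would argue through a combination of Mazur's theorem (giving weak closedness) and the Krein--Smulian theorem (passing from weak to weak-$*$ on bounded sets). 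With these in hand, Sion's theorem yields the minimax equality.

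To finish, let $v$ denote the common minimax value. From $\sup_p F(p, f^*) = v$ and $\inf_f F(dp_\lambda, f) = v$, the inequalities $F(dp_\lambda, f^*) \leq \sup_p F(p, f^*) = v$ and $F(dp_\lambda, f^*) \geq \inf_f F(dp_\lambda, f) = v$ sandwich to force $F(dp_\lambda, f^*) = v$, so $f^*$ itself achieves $\inf_f F(dp_\lambda, f)$. Setting $f_\lambda := f^*$ delivers the claim. The main obstacle I expect is the weak-$*$ lower semicontinuity step: Mazur's theorem gives weak (not weak-$*$) lower semicontinuity from strong continuity plus convexity on a Banach space, so the jump to weak-$*$ on $L^{\infty}$ requires exploiting boundedness and convexity through Krein--Smulian or a direct compactness argument on the ball, and is where the boundedness hypothesis on $\mathcal{L}$ does essential work.
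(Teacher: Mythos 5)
Your overall architecture is the same as the paper's: reduce $\sup_{P}$ to $\sup_{\mathrm{Conv}(P)}$ by linearity, get weak-$*$ compactness of the ball from Banach--Alaoglu, check that the $\mathcal{O}$-constraint is weak-$*$ closed (your half-space argument here is fine, and arguably cleaner than the paper's), invoke Sion, and finish with the sandwich $F(dp_\lambda,f^*)\le \sup_p F(p,f^*)=v=\inf_f F(dp_\lambda,f)\le F(dp_\lambda,f^*)$, which is exactly the paper's closing step. The one place you diverge is the semicontinuity of $f\mapsto\int\mathcal{L}(f(x),y)\,dp$ in the weak-$*$ topology, and that is where your argument has a genuine gap. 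Mazur upgrades norm-lower-semicontinuity plus convexity to \emph{weak} lower semicontinuity, but on $L^\infty=(L^1)^*$ the weak topology is strictly finer than the weak-$*$ topology, and Krein--Smulian does not bridge this: it says a convex set in a dual space is weak-$*$ closed iff its intersection with every ball is weak-$*$ closed, so it only localizes the problem to bounded sets --- the thing you must verify there is still weak-$*$ closedness, not weak closedness. A bounded, convex, norm-closed subset of a dual space need not be weak-$*$ closed (in $\ell^1=(c_0)^*$, the norm-closed convex hull of $\{e_n\}$ excludes $0$ yet $e_n\rightharpoonup^*0$), so your sublevel-set argument does not go through as stated.

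The conclusion you need is nevertheless true, but it has to be obtained differently. The paper's route is to argue that weak-$*$ convergence within the bounded ball forces pointwise a.e.\ convergence and then apply dominated convergence with the bound $M$. A more robust repair of \emph{your} route, staying with Mazur, is to apply it in $L^1$ rather than $L^\infty$: since $dp(x)\ll dx$, write the objective as a convex integrand functional $\int\phi(x,f(x))\,\rho(x)\,dx$; weak-$*$ convergence $f_n\rightharpoonup^* f$ in $L^\infty$ implies weak convergence in $L^1$ of the restrictions to finite-measure sets (test against $h\chi_E\in L^1$), Mazur then yields convex combinations of the $f_n$ converging in $L^1$-norm and hence a.e.\ along a subsequence, and convexity of $\mathcal{L}$ in its first argument together with Fatou/dominated convergence gives the lower semicontinuity Sion requires. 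Without some such argument the minimax equality is not justified, and everything downstream of it is unsupported.
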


While bounded loss may seem strong, it can be enforced by choosing the output space $\mathcal{O}$ carefully, e.g., for cross-entropy loss if we know the minimum probability of any label is $\epsilon >0$ we can choose $\mathcal{O}$ accordingly and avoid loss blow-up (from $log(0)$). %
On implicitly assuming a~\ref{eq:DRO} solution over $B_{L^{\infty}(\mathcal{X},\mathcal{O})}(r)$ exists, this is true for finite $P$ (Appendix~\ref{app:dro_ds_proof}) and hence applies for our group DRO minimax theorem.

For the rest of the paper, We will assume there is sufficient regularity (whether imposed by $\mathcal{O}$, $P$, or both) for the loss to be bounded  
and hence the result of Theorem~\ref{thm:dro_ds} applies. We will further assume that for some $r < \infty$ we have the bounded functions $B_{L^{\infty}(\mathcal{X},\mathcal{O})}(r)$ contains the Bayes optimal solutions for all $dp \in P$ (to leverage structural properties of cross-entropy and $\ell_2^2$). Note the last two assumptions are satisfied for cross-entropy if a.e. all labels have $\geq \epsilon$ probability for some $\epsilon > 0$ (and we choose $\mathcal{O}$ accordingly), and for $\ell_2^2$ if the allowed $y$ values are bounded. We leave it to future work to consider applying Theorem~\ref{thm:dro_ds} without these assumptions, or generalizing Theorem~\ref{thm:dro_ds} itself.

\subsection{Group DRO by Maximization} 

We now consider applying Theorem~\ref{thm:dro_ds} for group DRO. For cross-entropy and $\ell_2^{2}$ losses, the Bayes optimal functions are unique up to a measure 0 set over $\mathcal{X}$, and so any minimizer of the hardest distribution is DRO optimal by Theorem~\ref{thm:dro_ds} (given a DRO solution exists which is true for group DRO): all minimizers have the same average losses on $dp \in P$. In this case, if the set $P$ is also finite (i.e., group DRO) we further have the max-min optimization over bounded functions reduces to single maximization over a finite dimensional simplex $\Delta^{P}$.

\begin{corollary}[Group DRO by Maximization]
\label{cor:param_DS}
    Let $P$ be a finite set, and assume $\int \mathcal{L}(f(x),y) dp(x,y)$ is uniquely minimized over functions in $B_{L^{\infty}(\mathcal{X},\mathcal{O})}(r)$ (upto a measure $0$ set). Parameterize $Conv(P)$ by ${\lambda \in \Delta^P}$ via ${Conv(P) = \{dp_{\lambda} \coloneqq \sum_{dp \in P} \lambda_p dp: \lambda \in \Delta^{P} \}}$. Further denote the minimizer of $\int \mathcal{L}(f(x),y) dp_{\lambda}(x,y)$ in $B_{L^{\infty}(\mathcal{X},\mathcal{O})}(r)$ by $f_{\lambda}$ (parameterized by $\lambda$). Then for $\lambda$ realizing $\sup_{\lambda \in \Delta^{P}} \int \mathcal{L}(f_{\lambda}(x),y) dp_{\lambda}(x,y)$, $f_{\lambda}(x)$ realizes $\inf_{f \in B_{L^{\infty}(\mathcal{X},\mathcal{O})}(r)} \sup_{dp \in P} \int \mathcal{L}(f(x),y) dp(x,y)$.
    That is, the group DRO optimization over $L^{\infty}$ reduces to a single maximization over $\Delta^{P}$.
\end{corollary}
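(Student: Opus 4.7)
The plan is to reduce the corollary to a short sandwich argument that uses Theorem~\ref{thm:dro_ds} as a black box. Write $v \coloneqq \inf_{f \in B_{L^{\infty}(\mathcal{X},\mathcal{O})}(r)} \sup_{dp \in P} \int \mathcal{L}(f(x),y)\,dp(x,y)$ for the DRO value, and $c(\lambda) \coloneqq \sup_{dp \in P} \int \mathcal{L}(f_\lambda(x),y)\,dp(x,y)$ for the worst-case loss of the particular function $f_\lambda$. The corollary then amounts to the statement $\inf_{\lambda \in \Delta^P} c(\lambda) = v$, together with the claim that any $\hat\lambda$ attaining this infimum yields a DRO-optimal $f_{\hat\lambda}$.

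The first step is the easy inequality: because each $f_\lambda$ lies in $B_{L^{\infty}(\mathcal{X},\mathcal{O})}(r)$ by construction, restricting the DRO infimum to the parameterized family $\{f_\lambda : \lambda \in \Delta^P\}$ can only make the value larger, so $\inf_\lambda c(\lambda) \geq v$. The second step is the reverse inequality, where Theorem~\ref{thm:dro_ds} does the work. Applied with the finite set $P$ so that $Conv(P) = \{dp_\lambda : \lambda \in \Delta^P\}$, the theorem produces a $\lambda^* \in \Delta^P$ such that a minimizer of the $dp_{\lambda^*}$-expected loss realizes $v$. Under the corollary's uniqueness hypothesis, every such minimizer agrees a.e.\ $dx$ with the $f_{\lambda^*}$ defined in the corollary, and since each $dp \in P$ is absolutely continuous with respect to $dx$, this a.e.\ equality implies equality of expected losses under every $dp \in P$, hence $c(\lambda^*) = v$. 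Combining the two inequalities gives $\inf_\lambda c(\lambda) = v$, and for any $\hat\lambda$ attaining this infimum we then have $c(\hat\lambda) = v$, which by definition says $f_{\hat\lambda}$ realizes the DRO objective.

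The main obstacle is a bookkeeping one: Theorem~\ref{thm:dro_ds} delivers ``a minimizer of the expected loss under $dp_\lambda$'', whereas the corollary fixes $f_\lambda$ as the unique-a.e.\ minimizer in $B_{L^{\infty}(\mathcal{X},\mathcal{O})}(r)$. One must check that the two objects coincide well enough for their expected-loss values to agree, which is immediate from absolute continuity since two functions agreeing off a $dx$-null set also agree off a $dp$-null set for every $dp \in P$. Beyond this alignment, no new minimax, continuity, or compactness argument is required; the corollary is essentially a reindexing of Theorem~\ref{thm:dro_ds} through the simplex parameterization of $Conv(P)$.
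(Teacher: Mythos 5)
Your proposal is correct and follows essentially the same route as the paper's (very terse) proof: invoke Theorem~\ref{thm:dro_ds} to produce a mixture whose minimizer attains the DRO value, then observe that restricting the infimum to the family $\{f_\lambda\}$ loses nothing. Your added bookkeeping --- the explicit two-sided sandwich and the check that a.e.-$dx$ agreement of minimizers implies equal expected losses under every $dp \in P$ via absolute continuity --- is a welcome elaboration of details the paper leaves implicit, but it is not a different argument.
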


\begin{proof}
    Follows from Theorem~\ref{thm:dro_ds}, noting $P$ is finite so a DRO solution exists, and rewriting the condition for the mixture with the parameterizations given by uniqueness of solutions.
\end{proof}

\section{\method}
\label{sec:method}

\begin{algorithm}[t]
\caption{Empirical \method}
\label{alg:dro_ds}
\begin{flushleft}
\textbf{Require:} 
Step size $\eta$, number of steps $n$, loss function $\mathcal{L}$ (either cross-entropy or $\ell_2^2$), and, for each distribution $dp$ in the set $P$, samples $D_p$, proxy/exact covariate density $p(x)$, and proxy/Bayes optimal prediction function $f_p$. \\
\textbf{Note:} If there is no covariate shift then one can set $p(x) = q(x)~\forall dp \in P$ for any fixed $q(x)$; this has no impact due to symmetry in the formula used in the algorithm.
\\
\textbf{Initialize:} $\lambda_p \gets \frac{1}{|P|}$ for all $dp \in P$
\end{flushleft}

\begin{algorithmic}[1]
\FOR{$i = 1, \ldots, n$}
    \STATE $f_{\lambda}(x) \gets \frac{\sum_{p \in P} \lambda_p p(x) f_p(x)}{\sum_{p \in P} \lambda_p p(x)}$
    \STATE $l \gets \sum_{dp \in P} \frac{\lambda_p}{|D_p|} \sum_{(x,y) \in D_p} \mathcal{L}(f_{\lambda}(x),y)$
    \STATE $g \gets \nabla_{\lambda}l$
    \STATE $\lambda_p \gets \frac{\lambda_p e^{\eta g_p}}{\sum_{dp \in P} \lambda_p e^{\eta g_p}}$ for all $dp \in P$ 
\ENDFOR
\RETURN{$\{\lambda_p\}_{dp \in P}$}
\end{algorithmic}
\end{algorithm}

We now apply Theorem~\ref{thm:dro_ds} (under the assumptions of the previous section) to cross-entropy and $\ell_2^{2}$ for finite $P$. 
We use $p(x)$ and $f_p(x)$ to represent the covariate density and Bayes optimal function of distributions $dp \in P$, and $p_{\lambda}(x)$ and $f_{\lambda}(x)$ similarly for the mixture distributions $dp_{\lambda} \in Conv(P)$.

First note that, by Corollary~\ref{cor:param_DS}, the objective for the optimal mixture weights is 

\begin{equation}
\label{eq:mixmax}
    \sup_{\lambda \in \Delta^{P}} \int \mathcal{L}(f_{\lambda}(x),y) dp_{\lambda}(x,y) = \sup_{\lambda \in \Delta^{P}} \sum_{dp \in P} \lambda_{p} \int \mathcal{L}(f_{\lambda}(x),y) dp(x,y)
\end{equation}

We show that this is concave and that we can compute its gradients for cross-entropy and $\ell_2^2$. Thus, we can perform entropic mirror ascent~\citep{duchi2018introductory} to solve the constrained optimization. Algorithm~\ref{alg:dro_ds} describes this approach, which we call \method ("Mixtures by Maximization"). 

\paragraph{The Objective is Concave} In the case of cross-entropy, %
this objective reduces to the expected entropy of $y$ conditioned on $x$, $p_{\lambda}(y|x)$, over $x \sim p_{\lambda}(x)$. Therefore it is a concave maximization problem in the mixture weights via the concavity of entropy. In the case of $\ell_2^2$, under our assumptions, this objective reduces to the expectation of the conditional variance of $y$ given $x$ over $x \sim p_{\lambda}(x)$, which is again a concave objective. Appendix~\ref{app:concave_proof} provides full proofs of concavity.

\subsection{Computing Gradients of the \method Objective}
\label{ssec:ideal_mixmax}

We now demonstrate how to compute the gradient of the objective in Equation~\ref{eq:mixmax}, i.e.,  $\sum_{dp \in P} \int \nabla_{\lambda} \lambda_{p} \mathcal{L}(f_{\lambda}(x),y) dp(x,y)$~\footnote{Formally, we require $\nabla_{\lambda}\mathcal{L}(f_{\lambda}(x),y)$ to have sufficient regularity (e.g., is bounded).}, for cross-entropy and $\ell_2^2$ given $f_p$ and $p(x)$ and the ability to integrate for all $dp \in P$. We present this in cases, and later discuss empirical implementations.

\paragraph{\small Case 1: No Covariate Shift} Suppose there is no covariate shift, \emph{i.e.}, $\exists~p_0(x)$ such that $p(x) = p_0(x)~\forall dp \in P$. Then, for cross-entropy we have the Bayes optimal function $f_{\lambda}(x) = p_{\lambda}(y|x) = \sum_{dp \in P} \lambda_{p} p(y|x) = \sum_{dp \in P} \lambda_p f_p(x)$ and for $\ell_2^2$ we also have $f_{\lambda}(x) = \mathbb{E}_{y \sim p_{\lambda}(y|x)}y = \sum_{dp \in P} \lambda_{p} \mathbb{E}_{y \sim p(y|x)}y = \sum_{dp \in P} \lambda_p f_p(x)$. Hence given $f_p(x)~\forall dp \in P$ we can compute $f_{\lambda}(x)$ and $\nabla_{\lambda} f_{\lambda}(x)$. Hence, $\nabla_{\lambda} \lambda_{p} \mathcal{L}(f_{\lambda}(x),y)$ follows by product and chain rule, and given the ability to integrate over $dp \in P$ we can compute the gradient of the objective. %

\paragraph{\small Case 2: Covariate Shift} The more general expression under covariate shift, for cross-entropy and $\ell_2^2$, is $f_{\lambda}(x) = \frac{\sum_{dp \in P} \lambda_p f_p(x) p(x)}{\sum_{dp' \in P} \lambda_{p'} p'(x)}$. Hence computing $f_{\lambda}(x)$ and $\nabla_{\lambda} f_{\lambda}(x)$ would further require knowledge of $p(x)~\forall dp\in P$. Given this, computing $\nabla_{\lambda} \lambda_{p} \mathcal{L}(f_{\lambda}(x),y)$ follows by product and chain rule, and given the ability to integrate over $dp \in P$ we can compute the gradient of the objective.

\subsection{Empirical \method Gradients and Practical Considerations}
\label{ssec:emp_mixmax}

\begin{figure}[t!]
\centering
    \centering
    \includegraphics[scale = 0.4]{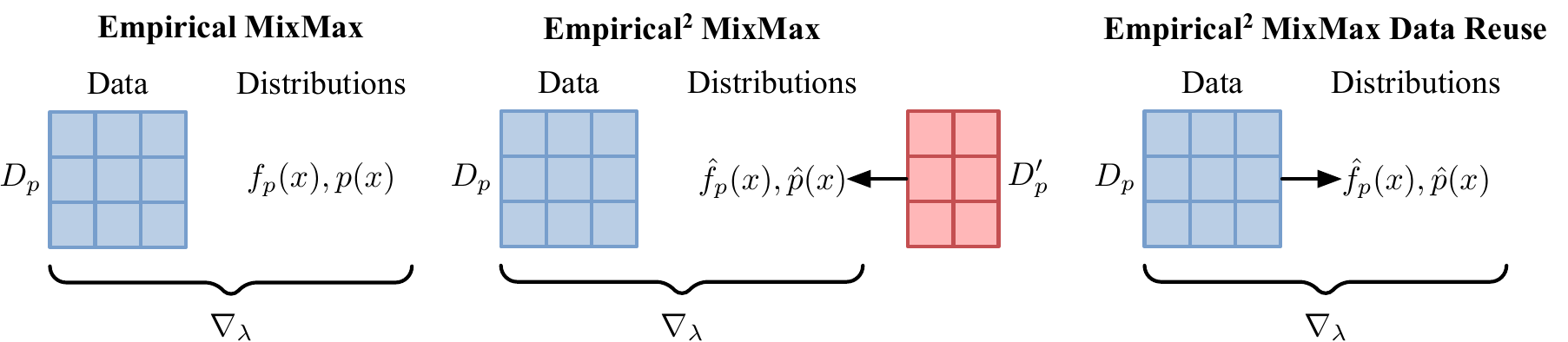}
\caption{An illustration of the data requirements for the empirical \method \textcolor{blue}{gradients}.}
\label{fig:mixmax}
\end{figure}

Section~\ref{ssec:ideal_mixmax} described how to compute the gradients of the MixMax objective given we have the Bayes optimal function (and covariate density function if necessary) for each distribution, and can integrate exactly over the domain. We now decribe empirical gradients which remove these assumptions, summarized in Figure~\ref{fig:mixmax}. These add varying sources of empirical error.

\paragraph{Empirical MixMax (EMixMax)}

Computing the MixMax gradients is often intractable due to the integral over $dp \in P$. However, given datasets $D_p$ for $dp \in P$ one can compute empirical MixMax (EMixMax) gradients and update $\lambda$ with stochastic entropic mirror ascent (Algorithm~\ref{alg:dro_ds}):

\begin{align*}
    \nabla^{\mathrm{EMixMax}}(\lambda) = \sum_{dp \in P} \frac{1}{|D_p|}\sum_{(x,y) \in D_p} \nabla_{\lambda} \lambda_p \mathcal{L}(f_{\lambda}(x),y).
\end{align*}

We prove that this is an unbiased estimator and discuss its computational complexity in Appendix~\ref{app:comp_mixmax}.

\paragraph{Empirical$\mathbf{^2}$ MixMax}

One also usually does not know the Bayes optimal functions $f_p(x)$ or the densities $p(x)$. However one %
can train a model on samples from $dp \in P$ to obtain approximations $\hat{f_p}(x)$ and $\hat{p}$, and then use another set of samples to compute $\nabla^{EMixMax}$. We call this approach Empirical$^2$ MixMax or E$^2$MixMax. More precisely, assuming no covariate shift, given i.i.d datasets $D_p$,$D_p'~\forall dp \in P$, the E$^2$MixMax gradients are%

\begin{align*}
\nabla^{\mathrm{E}^2\mathrm{MixMax}}(\lambda) = \sum_{dp \in P} \frac{1}{|D_p|}\sum_{(x,y) \in D_p} \nabla_{\lambda} \lambda_p \mathcal{L}\left(\sum_{dp \in P} \lambda_p \hat{f_p}(x),y\right) 
\\
\text{where } \hat{f_p}(x) = \arg \min_f \frac{1}{|D_p'|} \sum_{(x,y) \in D_p'} \mathcal{L}\left(f(x),y\right).
\end{align*}

It is challenging to evaluate the expectation of $\nabla^{\mathrm{E}^2\mathrm{MixMax}}$, and we leave this to future work. Instead, we study this estimator empirically and find it gives results close to EMixMax. In the case of covariate shift, we also fit density approximations $\hat{p}(x)$ on the second dataset $D_p'$ and use the covariate shift definition of $f_{\lambda}$. Algorithm~\ref{alg:dro_ds} describes the changes for E$^2$\method. We also consider \emph{E$^{2}$MixMax with Data Reuse}, i.e., $D_p = D_p'$, as a more sample efficient alternative.

\paragraph{How to Use \method Weights}

Finally, \method mixture weights can be used in two ways. We can use the weights to define a mixture distribution and fit a new model on it, which Theorem 3.1 states will result in a group DRO solution if we fit optimally. Alternatively, we can use the mixture weights to combine our (approximate) $f_p$ models (i.e., the $f_{\lambda}$ formula), which also returns an approximation of the best model for the MixMax mixture distribution. These approaches provide a trade-off: the first approach has higher training cost but results in a single model and hence smaller inference cost. We employed both approaches in our experiments and found they both improved over baselines.

\subsection{Illustrating \method}
\label{ssec:toy_exps}

\begin{figure}[t!]
\centering
\begin{subfigure}[t]{0.5\textwidth}
    \centering
    \includegraphics[scale = 0.35]{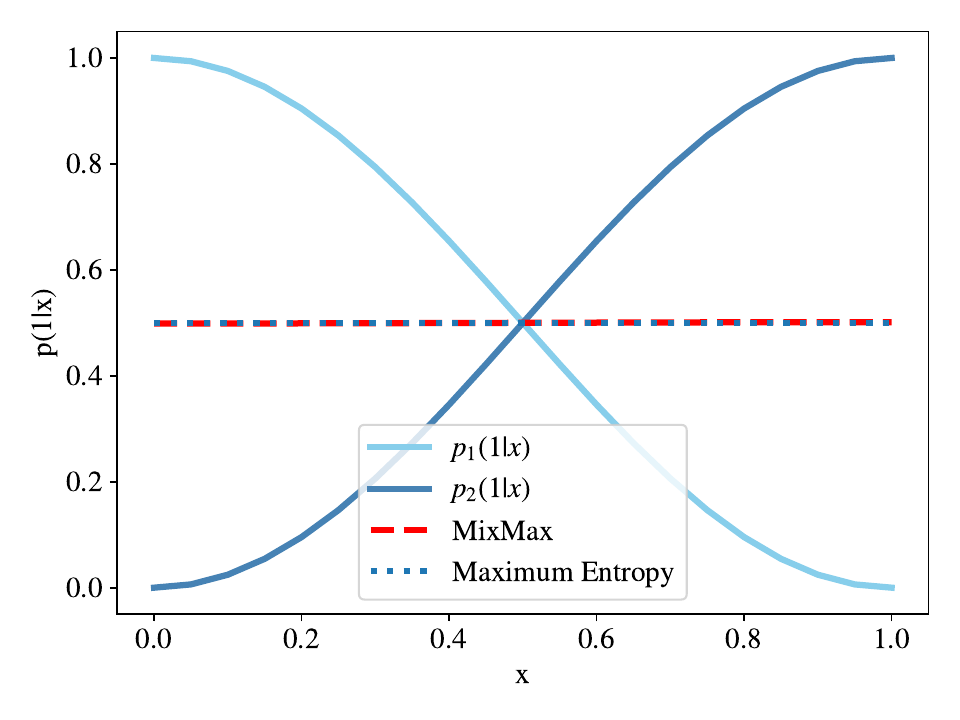}
    \caption{Maximum Entropy Conditional in Mixture Span}
    \label{fig:CE_in_span}
\end{subfigure}%
\begin{subfigure}[t]{0.5\textwidth}
    \centering
    \includegraphics[scale = 0.35]{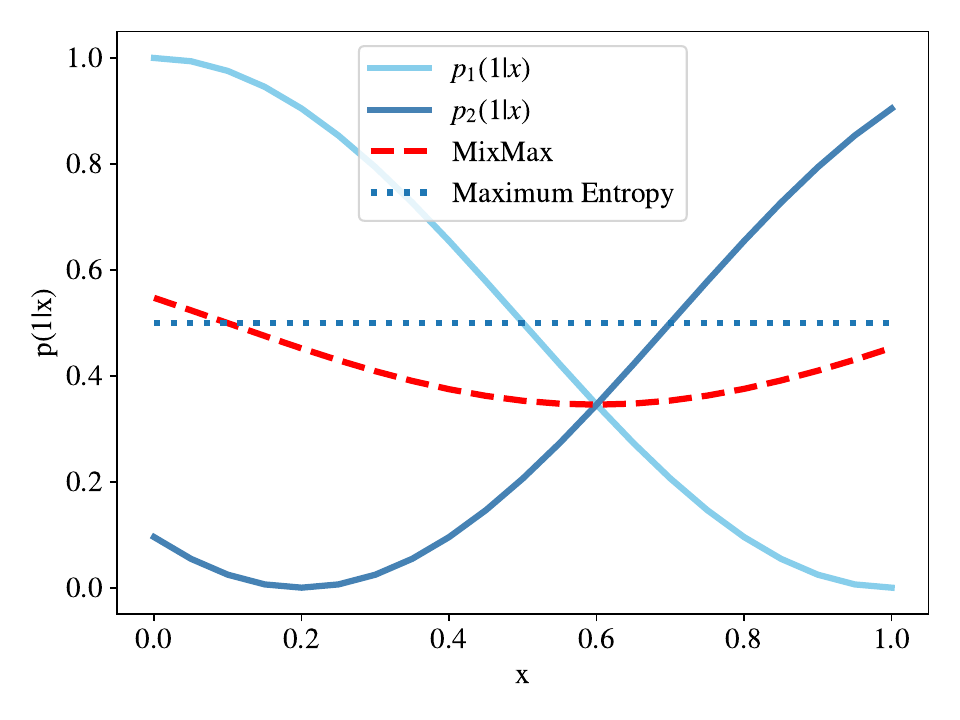}
    \caption{Maximum Entropy Conditional \textbf{not} in Mixture Span}
    \label{fig:CE_not_in_span}
\end{subfigure}%
\caption{
E\method for cross-entropy maximizes average prediction entropy within the mixture span.
}
\label{fig:toy_CE}
\end{figure}

Here we investigated patterns for the group DRO solutions found by E\methodnospace, running it with many samples as to be representative of the true group DRO solution. To understand Binary Classification we considered two cases: one where the two distribution were mirror opposites of each other, and one where this was not the case. Specifically we considered binary classification distributions with the same covariate probabilities $p(x) = unif[0,1]$, but with $p_1(1|x) = 0.5 cos( \pi x) + 0.5$ and $p_2(1|x) = -0.5 cos( \pi x) + 0.5$ (Figure~\ref{fig:CE_in_span}) or $p_2(1|x) = -0.5 cos( \pi (x-0.2)) + 0.5$ (Figure~\ref{fig:CE_not_in_span}). Note the \method objective maximizes average entropy within the mixture span, and in the first case we found E\method selected random guessing which is the maximal entropy distribution (the true group DRO solution). In the second case where random guessing is not in the span, E\method found a mixture closer to random guessing than the individual distributions (expected for the true group DRO solution). For regression we considered deterministic predictions and varied which distributions had the extreme predictors. Specifically, we considered a fixed $p(x) = unif[0,1]$, but with $p_1(1|x) = 0.2 cos( \pi x) + 0.5$, $p_2(1|x) = 0.1$, and $p_3(1|x) = 0.15$ or setting $p_3(1|x) = 0.8$ (Figure~\ref{fig:l2_case_1}) and Figure~\ref{fig:l2_case_2} in Appendix~\ref{app:tables_figs} respectively). We found E\method always found the extreme functions and balanced those, which minimizes maximal $\ell_2^2$ error (true group DRO solution).

\section{Related Work}

\paragraph{Distributionally Robust Optimization} DRO is motivated by several applications not limited to just machine learning, such as resource allocation~\citep{dupavcova1987minimax}. Approaches for DRO often focus on ``uncertainty" sets $P$ (the set of distributions we are minimizing the maximum error over) that admit a duality theory~\citep{rahimian2019distributionally, delage2010distributionally, bertsimas2010models, wiesemann2013robust, ben2013robust}, such as Lagragian duality, conic duality, Fenchel duality, etc. For theorem~\ref{thm:dro_ds} we assume nothing on $P$, and for some results, that it is finite similar to work on the Cutting-Surface approaches to DRO~\citep{rahimian2019distributionally, pflug2007ambiguity, rahimian2019identifying, bansal2018decomposition}; the finite $P$ case is widely called group DRO~\citep{sagawa2019distributionally} in deep learning for its applications where well-specified "groups" exist in the collected data. In particular, for finite $P$ the equivalence to the convex hull of the set of distributions used in Theorem~\ref{thm:dro_ds} already appears as Lemma 17 in~\citet{rahimian2019distributionally} and is wide-spread in the finite $P$ literature. On similarity in theory, our main theorem that DRO is solved by data mixing is most closely related to the saddle point approaches to stochastic programming~\citep{shapiro2002minimax, dupavcova1987minimax, nemirovski2009robust, zhang2024stochastic, yu2024efficient}. These results present duality theory where solving for the hardest distribution is enough when the set of parameters to minimize w.r.t is a subset of  $\mathbb{R}^n$. In the case of \citet{dupavcova1987minimax} a focus is placed on sets of all distributions meeting certain moment constraints (e.g., only the mean and variance are known). \citet{shapiro2002minimax, zhang2024stochastic, yu2024efficient} consider the case of an arbitrary finite set of distributions like our work, proposing sample average approaches for optimal solutions. Our method also uses sample averages, but further leverages the structure of cross-entropy and $\ell_2^2$ loss over bounded functions to collapse the max-min optimization to a single maximization.

Nevertheless, to the best of our knowledge, our work is the first to extend these past minimax approaches to DRO over the set of all bounded functions (Theorem~\ref{thm:dro_ds}). This yields a concave objective to maximize (Corollary~\ref{cor:param_DS}) which we find yields good results for practical applications involving non-parametric learning and expressive non-linear model classes.

\paragraph{Data Mixing} The work of \citet{xie2024doremi}, building on past work on DRO for deep learning~\citep{sagawa2019distributionally, oren2019distributionally}, highlighted empirically how optimizing dataset mixtures can lead to better performance over several downstream tasks. However, it is not clear whether this is due to faster convergence or due to the optima being better suited for the set of downstream tasks. On this, past work has highlighted the role of dataset selection for convergence rates~\citep{sorscher2022beyond,kolossov2023towards}, alongside increasing sample access~\citep{jain2024scaling}. In this paper we asked what the role of dataset selection, in particular data mixing, is for having optima better suited for deployments with uncertain downstream tasks. The invariant risk minimization \citep{arjovsky2019invariant} and data bias~\citep{slowik2021algorithmic,slowik2022distributionally} literature have considered the same question, but the analysis there has been limited to problems satisfying the KKT conditions or to studying local minima in $\mathbb{R}^n$, and do not discuss how to find the best data mixture. We also note recent work by~\citet{fan2023doge} proposed data mixing methods that build on \citet{xie2024doremi}, but departed away from the DRO objective (and \citet{xie2024doremi} performed comparably with enough compute).

\section{Experiments}

We only proved the guarantees of \method when we can obtain the optimal model over all bounded functions for our distributions. This is often not possible in practice. Hence, we tested how empirical implementations of \method (described in Section~\ref{ssec:emp_mixmax}) performed for real-world model classes with only finite samples from each distribution; we observed \method still improved over the baselines even when such empirical errors were introduced. 

In our experiments, we ran E\method for $10$ steps with $\eta = 2.0$ for all the sequence modeling tasks, and for $20$ steps with $\eta = 0.1$ for the tabular datasets unless otherwise specified; preliminary testing showed that this was enough to have the objective converge within $0.01$ between iterates. %
We used Nvidia RTX 2080 Ti and A100 GPUs to accelerate our experiments involving small transformers, and otherwise used Intel Xeon Silver 4210 CPUs and AMD EPYC 7643 CPUs. We used the GPTNeo architecture~\citep{gpt-neo} for the transformers (hyperparameters described in Appendix~\ref{app:exp_setup}).

\subsection{EMixMax and E$^2$MixMax variants Maximize MixMax Objective Comparably}
\label{ssec:mixmax_comp}

Here we investigated how E${^2}$MixMax with various data splits compared to EMixMax in maximizing the MixMax objective. %
Specifically, we considered 4 tokens $\{0,1,2,3\}$ and sequences generated by a Markov chain, and the task was to model the mixed distribution of sequences from lengths $1$ to $10$ where the probability a sequence was of length $i$ was always $1/10$. We constructed three Markov chains to perform group DRO over by independently sampling transition probabilities from a symmetric Dirichlet distribution with magnitude $1.0$. We then constructed training datasets by varying the number of samples per length, and considered either using all the samples to perform E\method with the ground truth prediction probabilities, E$^2$\method with $(75:25), (50:50),$ and $(25:75)$ split between the proxy model (a small transformer) training set and E\method set, and E$^2$\method with Data Reuse. This is shown in Figure~\ref{fig:MixMax_obj_comp} (in Appendix~\ref{app:tables_figs}), where we see that for low samples E$^2$\method with Data Reuse performs better than alternative E$^2$\method approaches, but as we increase the samples all methods are comparable to E\method.

Now just comparing E$^2$\method with Data Reuse to E$^2$\method with a $(75:25)$ split, but varying the similarity in the set of distribution (by changing the Dirichlet magnitude) with a fixed $800$ samples per length, we found in Figure~\ref{fig:autoregressive_biased} (in Appendix~\ref{app:tables_figs}) they perform comparably. We conclude one can effectively resuse training samples to run \method and be more sample efficient.

\subsection{E$^2$MixMax Performs Better than Group DRO Baselines for Sequence Modeling}
\label{ssec:autoregressive_exps}

\begin{figure}[t!]
\centering
\begin{subfigure}[t]{0.5\textwidth}
    \centering
    \includegraphics[scale = 0.35]{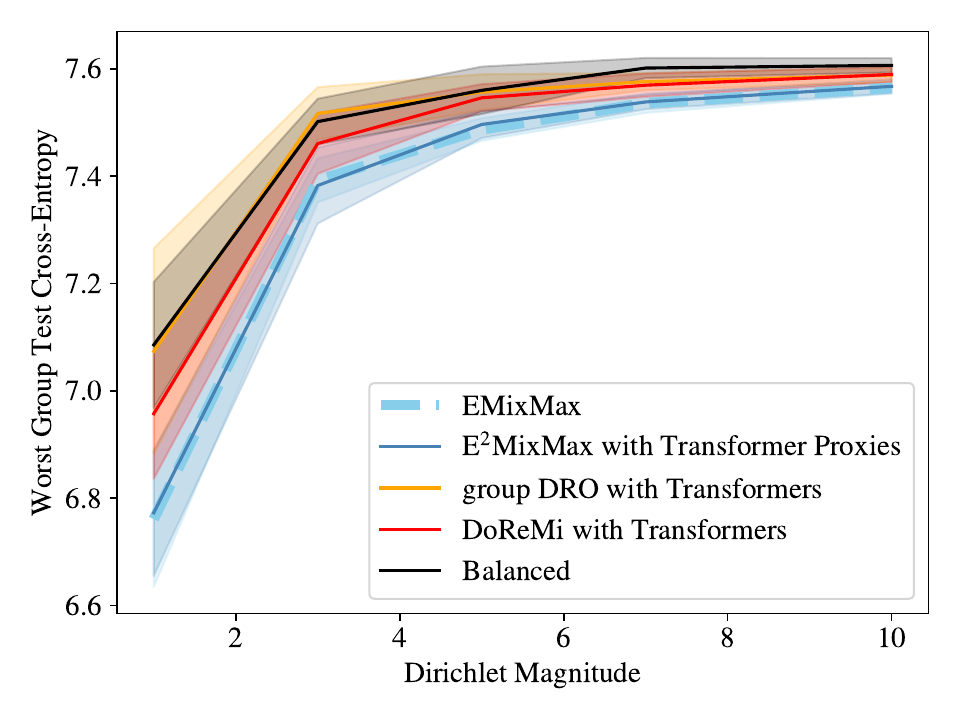}
    \caption{Comparing mixtures found by different methods.}
    \label{fig:autoregressive_opt}
\end{subfigure}%
\begin{subfigure}[t]{0.5\textwidth}
    \centering
    \includegraphics[scale = 0.35]{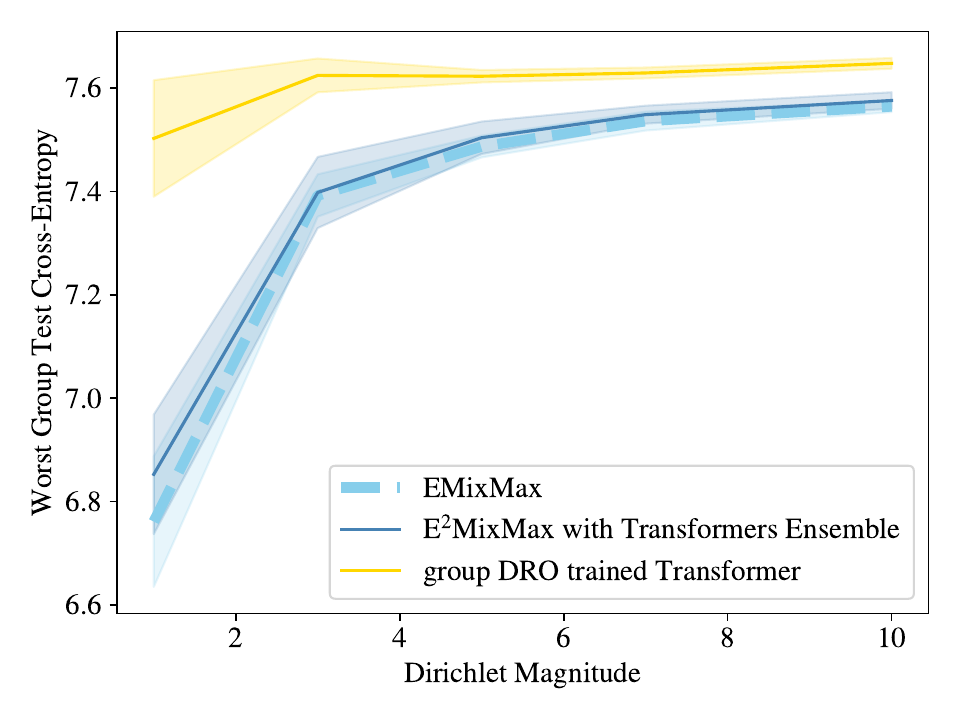}
    \caption{Comparing group DRO Models}
    \label{fig:autoregressive_emp}
\end{subfigure}%
\caption{E$^2$\method found better mixture weights on a sequence modeling task, and its ensemble model performed better than the group DRO trained model. The improvement is stronger when the distributions are less similar. We present the mean and $0.5$ standard deviation of the worst group (i.e., Markov chain) cross-entropy of the Bayes optimal function for the different methods' mixture weights in Figure~\ref{fig:autoregressive_opt}. Figure~\ref{fig:autoregressive_emp} further compares the performance between using E$^2$\method weights to ensemble its proxy functions and the model given by training with group DRO. %
}
\label{fig:autoregressive}
\end{figure}

Here we investigated how E$^2$\method compared to other mixture finding methods (balanced and DoReMi~\citep{xie2024doremi} which follows the near sample optimal algorithm of \citet{zhang2024stochastic} for convex group DRO) and the original gradient descent and ascent group DRO algorithm~\citep{sagawa2019distributionally} across sets of distributions with varying similarity. To do so we considered the same sequence modeling task as Section~\ref{ssec:mixmax_comp}, with Markov chain transition probabilities samples from symmetric Dirichlet distributions with magnitudes $1.0, 3.0, 5.0, 7.0, 10.0$ to represent increasing similarity between the Markov chains. For all methods we took a training set of $800$ samples per length and a test set of $200$ samples per length from each Markov chain. We applied E$^2$\method given a small transformer trained for next token prediction on $600$ of the $800$ training samples per length (leaving the other $200$ training samples per length to run E\methodnospace). We further ran E\method with the true probabilities on the $200$ held-out samples as a reference for best empirical performance. We applied DoReMi and group DRO using the same small transformer architecture and all $800$ training samples per length (per Markov chain). Hyperparameters are described in Appendix~\ref{app:exp_setup}, and we reported the results for each method with the hyperparameter settings that had the lowest maximum group cross-entropy test set loss over $15$ trials of generating sets of Markov chains and samples.

We found E$^2$\method returned better mixtures to fit to than other methods, being close to the optimal performance (E\methodnospace), and that ensembling its proxy transformers with its mixture weights performed better than training with group DRO and was still close to optimal. Specifically, in Figure~\ref{fig:autoregressive_opt} we compared the worst group cross-entropy loss of the Bayes optimal function for the mixture given by E$^2$\method, E\method using ground truths to represent optimal performance, DoReMi, the average group DRO mixture weights, and also balanced weights. As seen, fitting to the mixture given by E$^2$\method always performed best, doing even better as the distributions became less similar (lower magnitude). %
We also found the mixture weights returned by group DRO performed comparably to balanced, consistent with previous findings on group DRO performance~\citep{idrissi2022simple}. However, we note that group DRO's intended use is to apply the model it trained and not its mixture weights. In Figure~\ref{fig:autoregressive_emp} we observed using E$^2$\method mixture weights to ensemble its proxy models performed better than the group DRO trained model; note the methods have comparable training compute as the $3$ models in the ensemble trained on $1/3$ of the training set each. These results were consistent even if we used fewer training samples (Figures~\ref{fig:autoregressive_emp_fewer}~\ref{fig:autoregressive_opt_fewer} in Appendix~\ref{app:tables_figs}) highlighting that we had sufficient samples for both experiments.

\subsection{E$^2$\method with Data Reuse Beats Balancing Data for XGBoost}
\label{ssec:tab_exps}

\begin{figure}[t!]
\centering
\begin{subfigure}[t]{0.5\textwidth}
    \centering
    \includegraphics[scale = 0.35]{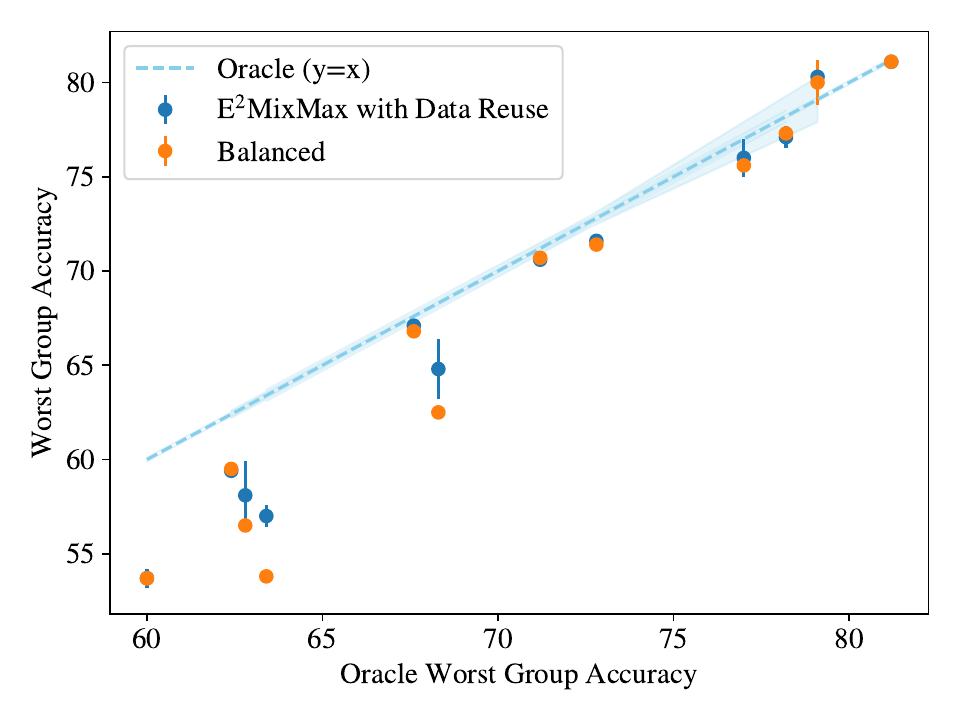}
    \caption{Comparing non-parametric group DRO methods}
    \label{fig:tab_exp}
\end{subfigure}%
\begin{subfigure}[t]{0.5\textwidth}
    \centering
    \includegraphics[scale = 0.35]{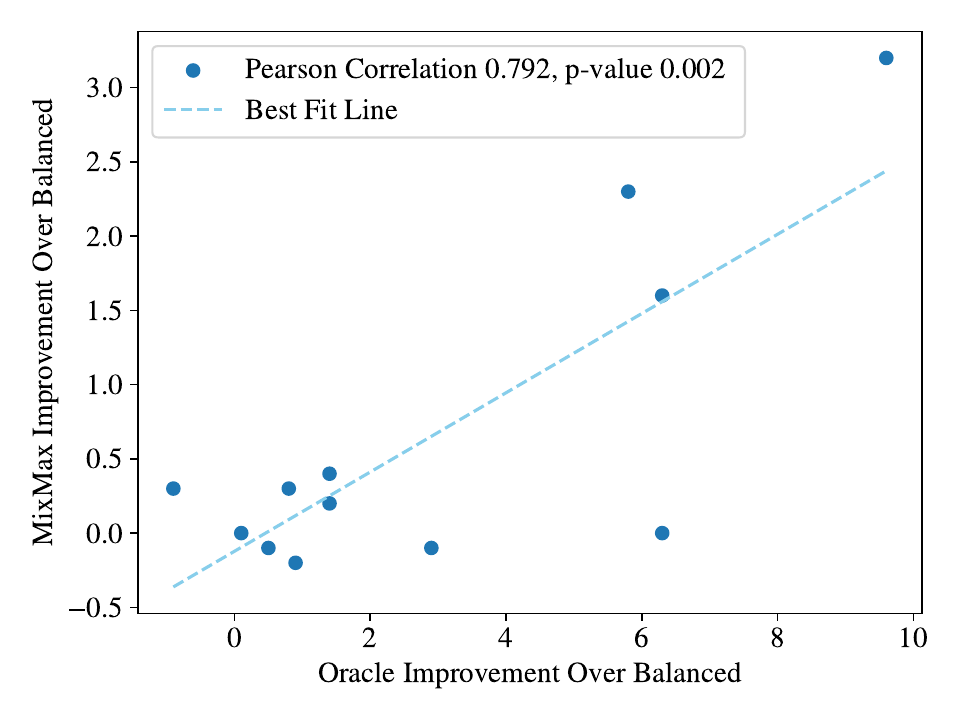}
    \caption{Correlation with room for improvement}
    \label{fig:MixMax_corr}
\end{subfigure}%
\caption{E$^2$\method with Data Reuse improved worst group accuracy over balancing data more when there was bigger room for improvement. In Figure~\ref{fig:tab_exp} we present the mean and $1$ standard deviation (over $5$ trials) of the worst group accuracy of E$^2$MixMax with Data Reuse and balanced data as a function of the oracle accuracy for that setting. In Figure~\ref{fig:MixMax_corr} we plot E$^2$\method with Data Reuse's improvement in worst group accuracy over balanced data as a function of the Oracle worst group accuracy (i.e., having a model trained for each distribution) improvement over balanced data; we observed a Pearson correlation of $0.792$ with p-value $0.002$. 
}
\label{fig:mixmax_tab_results}
\end{figure}

We compared E$^2$\method with Data Reuse weights to data balancing for non-parametric learning algorithms, in particular XGBoost~\citep{chen2016xgboost} which is known to be the state of the art for tabular data. We selected ACSIncome \citep{ding2021retiring} (released under the MIT license) and CelebA annotations \citep{liu2015faceattributes} (released for non-commercial use~\footnote{The agreement for use is at \url{https://mmlab.ie.cuhk.edu.hk/projects/CelebA.html}.}) to test on. For ACSIncome we constructed the dataset from the first $10$ American states in alphabetic order, and considered group shifts from race and sex. We further constructed variations of the dataset using all the features, the first $2$ features, and the first feature to introduce varying covariate shifts. For CelebA annotations, we used attractiveness as the label with Young and Pale Skin as the group shifts, and constructed variations of the dataset using all features, the first $10$ features, and the first $5$ features~\footnote{The different number of features compared to ACSIncome was because the features were binary while ACSIncome first two features take on more values.}. We used random $80\%-20\%$ train-test splits in all settings. We applied E$^2$\method with Data Reuse by using XGBoost models (trained on the group data) as proxies for label probabilities, and modeled covariate probabilities using Gaussian kernel density estimation. We then returned an XGBoost model trained on the same training data but with the loss reweighed according to weights returned by E$^2$\method with Data Reuse. For the balanced baseline, we returned an XGBoost model trained with the data balanced by by normalizing the loss for each group. Lastly, as a measure for best possible performance, we also presented the worst oracle accuracy, where oracle accuracy refers to the test accuracy of an XGBoost model trained on the same distribution (where we need access to the group identity at test time to use this model). Hyperparameters are described in Appendix~\ref{app:exp_setup}, and we reported the results for the hyperparameter setting with the best performance over $5$ trials with random train-test splits.

As seen in Figure~\ref{fig:tab_exp} ( Table~\ref{tab:tabular_exp} in Appendix~\ref{app:tables_figs}), E$^2$\method with Data Reuse matched or outperformed the worst group accuracy of data balancing in all settings, and similarly for worst group loss (Table~\ref{tab:tabular_exp_loss} in Appendix~\ref{app:tables_figs}). In particular, as seen in Figure~\ref{fig:MixMax_corr} we observed \textbf{the improvement over the worst group accuracy of data balancing was stronger when more room for improvement was possible} (a larger gap between oracle and balanced worst group accuracy): we observed a $0.79$ Pearson correlation factor. %
In the end, we found E$^2$\method with Data Reuse improved worst group accuracy on ACSIncome with groups by sex and one feature by $1.6\%$ ($2.8\%$ relative gain), and CelebA with groups by Young and 10 and 5 features by $2.3\%$ and $3.2\%$  ($3.7\%$ and $5.9\%$ relative gains).

\section{Conclusion}

In this paper we showed that group DRO over bounded functions can be solved by fitting to an optimal data mixture, and that maximizing a particular concave objective returns the optimal mixture weights for cross-entropy and $\ell_2^2$ loss. We called this method for finding data mixtures \methodnospace. Our experiment on a simple sequence modeling task showed that even empirical versions \method improved over previous parametric group DRO baselines. An empirical version of \method was also shown to improve over the baseline of balancing data for non-parametric learning algorithms, specifically XGBoost, for which no previous group DRO methods were proposed. We leave open the problem of applying our minimax theorem to other losses, and proposing better empirical versions of \methodnospace.

\paragraph{Limitations} The empirical versions of MixMax can immediately scale to provide group DRO solutions to large generative modeling over high-dimensional spaces (e.g., decoder-only LLMs) where there is no need to model covariate shifts as there are no inputs. However, a main technical limitation of MixMax methods is the need to model covariate shifts if covariate shifts exist, which often requires large amounts of data in high-dimensional covariate spaces. %
We also acknowledge that DRO may be used to claim a model is fair, despite it still carrying societal biases. We hope future uses will take care in considering the claims DRO can and cannot make.

\section*{Acknowledgements}

Resources used in preparing this research were provided in part by the Province of Ontario, the Government of Canada through CIFAR, and companies sponsoring the Vector Institute. We acknowledge the support of the Natural Sciences and Engineering Research Council of Canada (NSERC), RGPIN-2021-03445. Anvith Thudi is supported by a Vanier Fellowship from NSERC. We thank Relu Patrascu for administrating and procuring the compute infrastructure used for the experiments in this paper. We would also like to thank Ayoub El Hanchi, Leo Cotta, Nishkrit Desai, Stephan Rabanser, Sierra Wyllie, Alon Albalak, Nicolas Papernot, Colin Raffel, and many others at the Vector Institute for discussions contributing to this paper.

\bibliography{references}
\bibliographystyle{abbrvnat}

\appendix

\section*{Appendix}

\section{Proofs}
\label{app:proofs}

\subsection{Proof of Theorem~\ref{thm:dro_ds}}
\label{app:dro_ds_proof}

The following proof relies on Theorem 4.2 in~\cite{sion1958general} which we restate now. Note this is not the usually cited Sion's minimax theorem, but is rather the theorem that shows the topology for one space does not matter if we have a convex-concave objective (as opposed to quasi-convexity).

\begin{theorem}[\cite{sion1958general}]
    Let $M$ be compact, $N$ any space, f
a function on $M \times N$ that is concave-convexlike. If $f(u, v)$ is upper semicontinuous in $u$ for each $v$, then $\sup \inf f =\inf \sup f$
\end{theorem}

We now proceed to our theorem, where the proof will follow directly from the finiteness of $P$, choosing a suitable topology given the existence of radon-nikodym derivatives, and the above theorem.

\begin{proof}
    Note $\mathbf{L}(f,dp) = \int \mathcal{L}(f(x),y) dp(x,y)$ is convex-concavelike (in fact convex-linear). Now note $\forall f$ $\sup_{P} \mathbf{L}(f,dp) = \sup_{Conv(P)} \mathbf{L}(f,dp)$ by linearity in $dp$. We hence have $\inf_{F} \sup_{P} \mathbf{L}(f,dp) = \inf_{F} \sup_{Conv(P)} \mathbf{L}(f,dp)$ and so will characterize the latter.

    Consider the $\ell_1$ topology over the finite dimensional space of functions spanned by the Radon-Nikodym derivatives for $dp \in P$ w.r.t $dx$ (which exist by assumption). In particular let $P'$ be a basis for $Span(P)$ (i.e., reduce $P$ to a linearly independent subset of functions), and consider the norm $|| \sum_{p_i \in P'} \lambda_i p_i|| = \sum |\lambda_i|$. The norm is well-specified by definition of $P'$. Note $Conv(P)$ is a closed and bounded subset of this finite-dimensional linear space, and hence is compact (by Heine-Borel).

    The objective is also continuous in this topology for every $f$. Formally, suppose $\lambda^n \rightarrow \lambda^*$ then $ \lim_{n \rightarrow \infty} |\int \mathcal{L}(f(x),y) \sum \lambda_{p}^{n} dp  - \int \mathcal{L}(f(x),y) \sum \lambda_{p}^{*} dp | = \lim_{n \rightarrow \infty} |\sum (\lambda_{p}^{n} - \lambda_{p}^{*}) \int \mathcal{L}(f(x),y) dp | \leq M \lim_{n \rightarrow \infty} || \lambda^{n} - \lambda^*|| = 0$ hence the objective is continuous.

    So all the conditions for Sion's minimax theorem are met which implies $$\sup_{dp \in Conv(P)} \inf_{f \in \mathcal{F}} \int \mathcal{L}(f(x),y) p(x,y) = \inf_{f \in \mathcal{F}} \sup_{dp \in Conv(P)} \int \mathcal{L}(f(x),y) p(x,y)$$ So if $f^*$ achieves the inf in the RHS and $dp_{\lambda}$ achieves the sup in the LHS, then note 
    
    \begin{multline*}
        \int \mathcal{L}(f^*(x),y) dp_{\lambda}(x,y) \leq \sup_{dp \in Conv(P)} \int \mathcal{L}(f^*(x),y) dp(x,y) 
        \\ = \inf_{f \in \mathcal{F}} \int \mathcal{L}(f(x),y) dp_{\lambda}(x,y)
    \end{multline*}
    
    to conclude that $\int \mathcal{L}(f^*(x),y) dp_{\lambda}(x,y) = \inf_{f \in \mathcal{F}} \int \mathcal{L}(f(x),y) dp_{\lambda}(x,y)$. So in particular $f^*$ is a minimizer of $dp_{\lambda}$.
\end{proof}

\paragraph{Remark on Existence of DRO Solution} The theorem implicitly required that there exists a DRO solution in the first place, and this is satisfied if $P$ is finite by the continuity of the supremum over a finite set of continuous functions (note continuity of the objective for a single $dp$ was proven above). Further care is needed when $P$ is not finite, but the paper focuses on the finite $P$ case and so we do not discuss this further.

\paragraph{Remark on Bounded Functions Assumptions} In fact the previous theorem holds if $\mathcal{F}$ is any convex closed set of functions. However we chose to consider all bounded function to make the corollary of the theorem (when having the Bayes optimal functions for all the mixture in $\mathcal{F}$) clearer.

\subsection{Concavity of Objectives}
\label{app:concave_proof}

\begin{fact}
\label{fact:ent_concave}
    If $\mathcal{L}$ is cross-entropy, then $\int \mathcal{L}(f_{\lambda}(x),y) dp_{\lambda}(x,y)$ is concave in $\lambda$.
\end{fact}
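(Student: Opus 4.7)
The plan is to reduce the objective to a conditional entropy under the mixture and then show that map is concave in $\lambda$ via the perspective-function interpretation of Shannon entropy.

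First I would invoke the standing assumption that $B_{L^{\infty}(\mathcal{X},\mathcal{O})}(r)$ contains the Bayes-optimal functions for every $dp \in P$ and hence for every convex combination $dp_\lambda$. For cross-entropy, the (a.e.) unique Bayes-optimal predictor under $dp_\lambda$ is the conditional $f_\lambda(x)_y = p_\lambda(y\mid x)$. Substituting collapses the objective:
\[
\int \mathcal{L}(f_\lambda(x),y)\,dp_\lambda(x,y) \;=\; -\!\int \log p_\lambda(y\mid x)\,dp_\lambda(x,y) \;=\; H_{p_\lambda}(Y\mid X),
\]
the conditional entropy of $Y$ given $X$ under the joint $dp_\lambda$.

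Next I would recognize the integrand of this conditional entropy as the perspective of Shannon entropy. Writing $h$ for Shannon entropy and $\phi(q,s) := s\, h(q/s)$ (extended continuously by $\phi(0,0):=0$), we have
\[
H_{p_\lambda}(Y\mid X) \;=\; \int p_\lambda(x)\, h\bigl(p_\lambda(\cdot \mid x)\bigr)\,dx \;=\; \int \phi\bigl(p_\lambda(x,\cdot),\, p_\lambda(x)\bigr)\,dx.
\]
Because $h$ is concave, its perspective $\phi$ is jointly concave in $(q,s)$. The map $\lambda \mapsto \bigl(p_\lambda(x,\cdot),\, p_\lambda(x)\bigr)$ is affine for every fixed $x$ since $p_\lambda = \sum_p \lambda_p\, p$ is linear in $\lambda$, so the integrand is concave in $\lambda$ pointwise in $x$. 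Integration over $x$ preserves concavity, which finishes the argument.

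The main obstacle I anticipate is giving a clean justification of the concavity of the perspective $\phi$. A textbook citation that perspectives preserve concavity would suffice, but if a self-contained derivation is preferred I would compute the Hessian of $g(q) = -\sum_y q(y)\log q(y) + \bigl(\sum_y q(y)\bigr)\log\bigl(\sum_y q(y)\bigr)$ directly, obtaining $H_{jk} = 1/s - (1/q_j)\delta_{jk}$ with $s = \sum_y q(y)$, and verify $v^{\top} H v \leq 0$ for all $v$ by Cauchy--Schwarz applied to $(\sum_y v_y)^2 \leq (\sum_y q(y))(\sum_y v_y^2/q(y))$. A secondary, minor issue is handling $x$ with $p_\lambda(x) = 0$, which is resolved by the continuous extension $\phi(0,0)=0$ and the fact that such $x$ contribute $0$ to the integral.
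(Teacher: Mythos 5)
Your proposal is correct, and it rests on the same core fact as the paper's proof: conditional entropy is concave in the joint distribution, and $\lambda \mapsto dp_\lambda$ is affine. The difference is in the packaging. The paper works directly with the reweighted coefficients $\lambda_p(x) = \lambda_p\, p(x)/p_\lambda(x)$ and applies concavity of entropy pointwise in $x$, arriving at the inequality $F(\lambda) \ge \sum_{p} \lambda_p F(e_p)$ comparing the objective at $\lambda$ to the convex combination of its values at the simplex vertices; strictly speaking that displayed inequality is weaker than concavity, and one has to note that the identical computation applied to an arbitrary pair of mixtures $dp_{\lambda^1}, dp_{\lambda^2}$ gives the full midpoint inequality. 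Your perspective-function route, writing $p_\lambda(x)\,h\bigl(p_\lambda(\cdot\mid x)\bigr) = \phi\bigl(p_\lambda(x,\cdot), p_\lambda(x)\bigr)$ with $\phi(q,s) = s\,h(q/s)$ jointly concave, delivers concavity on the whole simplex in one step and cleanly absorbs the $p_\lambda(x)=0$ edge case via the continuous extension $\phi(0,0)=0$; the paper's reweighting $\lambda_p(x)$ is exactly the change of variables that the perspective transform hides. Your Hessian verification of the perspective's concavity (via Cauchy--Schwarz, $(\sum_y v_y)^2 \le (\sum_y q_y)(\sum_y v_y^2/q_y)$) is a correct self-contained substitute for the textbook citation. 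Both proofs rely on the same standing assumption, stated before Corollary~\ref{cor:param_DS}, that the Bayes-optimal predictor lies in $B_{L^{\infty}(\mathcal{X},\mathcal{O})}(r)$ so that the objective collapses to $H_{p_\lambda}(Y\mid X)$; you make that dependence explicit, which is good.
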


\begin{proof}
    We note ${f_{\lambda} = p_{\lambda}(y|x) = \sum_{dp \in P} \lambda_p(x) p(y|x)}$ where ${\lambda_p(x) = \frac{\lambda_p p(x)}{\sum_{dp' \in P} \lambda_{p'} p'(x)} = \frac{\lambda_p p(x)}{p_{\lambda}(x)}}$. Note that ${- \int_{\mathcal{Y}} \log(\sum_{dp \in P} \lambda_p(x) p(y|x)) \sum_{dp \in P} \lambda_p(x) p(y|x) \geq \sum_{dp \in {}} \lambda_p(x) \int_{\mathcal{Y}} - log(p(y|x))p(y|x)}$ by the concavity of entropy. Applying this we have
    
    \begin{multline}
        \int_{\mathcal{X}} \int_{\mathcal{Y}} - log(p_{\lambda}(y|x)) p_{\lambda}(y|x) p_{\lambda}(x) \\ \geq \int_\mathcal{X} \left( \sum_{dp \in P} \lambda_p(x) \int_{\mathcal{Y}} - log(p(y|x))p(y|x) \right) p_{\lambda}(x) \\ = \int_\mathcal{X} \left(\sum_{dp \in P} \frac{\lambda_p p(x)}{p_{\lambda}(x)} \int_{\mathcal{Y}} - log(p(y|x))p(y|x)\right) p_{\lambda}(x) \\ = \sum_{dp \in P} \lambda_p \int_{\mathcal{X}} \int_{\mathcal{Y}} -log(p(y|x)) p(y|x) p(x)
    \end{multline}

    Note this inequality holds for an arbitrary set of distributions $P$. To now conclude concavity to the mixture weights, let $p_1 = p_{\lambda_1}, p_2 = p_{\lambda_2}$. Then by the above inequality we have $\int \mathcal{L}(f_{\alpha\lambda_1 + (1-\alpha)\lambda_2}(x),y) dp_{\alpha\lambda_1 + (1-\alpha)\lambda_2}(x,y) \geq \alpha\int \mathcal{L}(f_{\lambda_1}(x),y) dp_{\lambda_1}(x,y) + (1-\alpha)\int \mathcal{L}(f_{\lambda_2}(x),y) dp_{\lambda_2}(x,y)$. This proves concavity w.r.t the mixture weights $\lambda$, as desired.
    
\end{proof}

\begin{fact}
\label{fact:mse_concave}
    If $\mathcal{L}$ is $\ell_2^2$, then $\int \mathcal{L}(f_{\lambda}(x),y) p_{\lambda}(x,y)$ is concave in $\lambda$
\end{fact}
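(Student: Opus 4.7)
The plan is to mirror the proof of Fact~\ref{fact:ent_concave} almost verbatim, with entropy replaced by conditional variance and with the law of total variance playing the role that concavity of entropy played before.

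First I would identify the Bayes optimal function for $\ell_2^2$. Under the assumptions in Section~\ref{sec:analysis}, the unique (a.e.) minimizer of $\int \|f(x)-y\|_2^2\, dp_\lambda(x,y)$ over $B_{L^{\infty}(\mathcal{X},\mathcal{O})}(r)$ is the conditional mean $f_\lambda(x) = \mathbb{E}_{p_\lambda(y\mid x)}[y]$. Plugging this in, the objective reduces to
\[
\int \mathcal{L}(f_\lambda(x),y)\, dp_\lambda(x,y) \;=\; \int \mathrm{Var}_{p_\lambda(y\mid x)}[y]\; p_\lambda(x)\, dx,
\]
which is precisely the expected conditional variance under $p_\lambda$.

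Second, as in the cross-entropy proof, I would write $p_\lambda(y\mid x) = \sum_{dp \in P} \lambda_p(x)\, p(y\mid x)$ with $\lambda_p(x) = \lambda_p\, p(x)/p_\lambda(x)$. Then, applying the law of total variance to this $x$-wise mixture,
\[
\mathrm{Var}_{p_\lambda(y\mid x)}[y] \;=\; \sum_{dp \in P} \lambda_p(x)\, \mathrm{Var}_{p(y\mid x)}[y] \;+\; \sum_{dp \in P} \lambda_p(x)\, \bigl\| \mathbb{E}_{p(y\mid x)}[y] - \mathbb{E}_{p_\lambda(y\mid x)}[y]\bigr\|_2^2,
\]
and dropping the nonnegative second term gives the pointwise inequality
\[
\mathrm{Var}_{p_\lambda(y\mid x)}[y] \;\ge\; \sum_{dp \in P} \lambda_p(x)\, \mathrm{Var}_{p(y\mid x)}[y].
\]
For vector-valued $y$, the identity is applied coordinate-wise and summed, which is immediate.

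Third, I would multiply both sides by $p_\lambda(x)$, use the key simplification $\lambda_p(x)\, p_\lambda(x) = \lambda_p\, p(x)$, and integrate over $\mathcal X$ to obtain
\[
\int \mathrm{Var}_{p_\lambda(y\mid x)}[y]\, p_\lambda(x)\, dx \;\ge\; \sum_{dp \in P} \lambda_p \int \mathrm{Var}_{p(y\mid x)}[y]\, p(x)\, dx,
\]
which is the analog of the displayed inequality in Fact~\ref{fact:ent_concave} and establishes concavity in $\lambda$ in exactly the same sense. The only nonroutine step is invoking the law of total variance in the vector case, and that is essentially bookkeeping; the overall structure is identical to the cross-entropy argument, so I do not anticipate any real obstacle.
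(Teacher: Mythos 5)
Your proposal is correct and follows essentially the same route as the paper's proof: reduce the objective to the expected conditional variance, decompose $p_\lambda(y\mid x)$ as an $x$-wise mixture with weights $\lambda_p(x)=\lambda_p p(x)/p_\lambda(x)$, apply concavity of variance over mixture weights pointwise, and use $\lambda_p(x)p_\lambda(x)=\lambda_p p(x)$ to integrate. The only difference is that you justify the pointwise inequality explicitly via the law of total variance, whereas the paper simply cites that variance is concave with respect to mixture weights.
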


\begin{proof}
    We assume $f_{\lambda}$ is the bayes optimal solution, hence we have the loss is the bayes error $\int_{\mathcal{X}} Var_{p_{\lambda}(y|x)}(y|x) p_{\lambda}(x)$.  Note $ p_{\lambda}(y|x) = \sum_{dp \in P} \lambda_p(x) p(y|x)$ where $\lambda_p(x) = \frac{\lambda_p p(x)}{\sum_{dp' \in P} \lambda_{p'} p'(x)} = \frac{\lambda_p p(x)}{p_{\lambda}(x)}$, and variance is concave w.r.t mixture weights. Thus we have
    \begin{multline}
        \int \mathcal{L}(f_{\lambda}(x),y) p_{\lambda}(x,y) \\ = \int_{\mathcal{X}} Var_{p_{\lambda}(y|x)}(y|x) p_{\lambda}(x) \\ \geq \int_{\mathcal{X}} \left( \sum_{dp \in P} \lambda_p(x) Var_{p(y|x)}(y|x) \right)p_{\lambda}(x) \\ = \int_{\mathcal{X}} \left( \sum_{dp \in P} \frac{\lambda_p p(x)}{p_{\lambda}(x)} Var_{p(y|x)}(y|x) \right)p_{\lambda}(x) \\ = \sum_{dp \in P} \lambda_p \int_{\mathcal{X}} Var_{p(y|x)}(y|x) p(x) \\ = \sum_{dp \in P} \lambda_p \int \mathcal{L}(f_{p}(x),y) p(x,y)
    \end{multline}

    Note this inequality holds for an arbitrary set of distributions $P$. To now conclude concavity to the mixture weights, let $p_1 = p_{\lambda_1}, p_2 = p_{\lambda_2}$. Then by the above inequality we have $\int \mathcal{L}(f_{\alpha\lambda_1 + (1-\alpha)\lambda_2}(x),y) dp_{\alpha\lambda_1 + (1-\alpha)\lambda_2}(x,y) \geq \alpha\int \mathcal{L}(f_{\lambda_1}(x),y) dp_{\lambda_1}(x,y) + (1-\alpha)\int \mathcal{L}(f_{\lambda_2}(x),y) dp_{\lambda_2}(x,y)$. This proves concavity to mixture weights. 
\end{proof}

\section{E\method Gradient Discussion}
\label{app:comp_mixmax}

\paragraph{EMixMax is Unbiased:}
We show that the expectation of $\nabla^{\mathrm{EMixMax}}(\lambda)$ satisfies $$\mathbb{E}_{D_p \sim dp \in P} \nabla^{\mathrm{EMixMax}}(\lambda) = \sum_{dp \in P} \int \nabla_{\lambda} \lambda_{p} \mathcal{L}(f_{\lambda}(x),y) dp(x,y)$$ which is the gradient of the MixMax objective (Section~\ref{ssec:ideal_mixmax}).

\begin{fact}
$\nabla^{\mathrm{EMixMax}}(\lambda)$ is an unbiased gradient estimator for the \method objective (Equation~\ref{eq:mixmax}), assuming $|\nabla_{\lambda}\mathcal{L}(f_{\lambda}(x),y)|$ is bounded and $\mathcal{L}(f_{\lambda}(x),y)$ is differentiable a.e (w.r.t $\lambda$).
\end{fact}

\begin{proof}
    Note if $|\nabla_{\lambda}\mathcal{L}(f_{\lambda}(x),y)|$ is bounded, then by dominated convergence we have the derivative pulls into the integral for the \method objective (by the limit definition of derivatives, and the limit pulling in by dominated convergence) $$\nabla_{\lambda} \sum_{dp \in P} \lambda_{p} \int \mathcal{L}(f_{\lambda}(x),y) dp(x,y) = \sum_{dp \in P} \int \nabla_{\lambda} \lambda_{p} \mathcal{L}(f_{\lambda}(x),y) dp(x,y)$$

    Now note the expectation of EMixMax over sampling datasets of size $|D_p|$ i.i.d from $dp \in P$ is:

\begin{multline*}
    \mathbb{E}_{D_p \sim dp \in P} \sum_{dp \in P} \frac{1}{|D_p|}\sum_{(x,y) \in D_p} \nabla_{\lambda} \lambda_p \mathcal{L}\left(\sum_{dp \in P} \lambda_p \hat{f_p}(x),y\right) \\ = \sum_{dp \in P} \mathbb{E}_{D_p \sim dp \in P}\sum_{(x,y) \in D_p} \frac{1}{|D_p|} \nabla_{\lambda} \lambda_p \mathcal{L}\left(\sum_{dp \in P} \lambda_p \hat{f_p}(x),y\right)
    \\ = \sum_{dp \in P} \sum_{i =1}^{|D_p|} \frac{1}{|D_p|} \int \nabla_{\lambda} \lambda_p \mathcal{L}\left(\sum_{dp \in P} \lambda_p \hat{f_p}(x),y\right) dp(x,y)
    \\ = \sum_{dp \in P} \int \nabla_{\lambda} \lambda_p \mathcal{L}\left(\sum_{dp \in P} \lambda_p \hat{f_p}(x),y\right) dp(x,y)
\end{multline*}

Note by the previous equality this is the gradient of the MixMax objective (Equation~\ref{eq:mixmax}).

\end{proof}

\paragraph{Complexity of EMixMax} The EMixMax gradient computation in Algorithm~\ref{alg:dro_ds} currently scales at $O(|P|(\sum_{dp}|D_p|))$. This can be prohibitive when the datasets are large. One can reduce the cost to $O(|P|^2)$ by doing mini-batch sampling at every step. In particular, for every optimization step and $p \in P$ one samples $B_p \sim D_p$ uniformly where $|B_p| = B$. One then replaces line 3 in Algorithm~\ref{alg:dro_ds} with $$l \gets \sum_{dp \in P} \frac{\lambda_p}{B} \sum_{(x,y) \in B_p} \mathcal{L}(f_{\lambda}(x),y).$$

\section{Experimental Setups}
\label{app:exp_setup}

\subsection{Toy Experiments}

For the binary classification task, We ran E\method with $D_i$ consisting of $10000$ samples from each distribution, step size $\eta = 0.5$, and for $100$ steps. %

We ran E\method with $D_i$ again consisting of $10000$ samples from each distribution, and $\eta = 0.0001$ for $100$ steps. %

\subsection{Sequence Modeling}

For the \method methods, letting $y$ denote a sequence, note the task was to model $p(y)$ and so had no covariates and hence had no covariate shift (as one can take $\mathcal{X}$ to be a singleton). Thus \method only needed functions for the token probabilities from each distribution, and we considered both the true probabilities (as the optimal baseline) and transformers trained on each Markov chain as the proxy optimal functions. The transformer used is GPTNeo~\cite{gpt-neo} with $6$ hidden states, $2$ hidden layers, $2$ attention heads, $8$ intermediate size, and with $12$ max position embeddings. In both cases we used $200$ samples per length to run E\method, keeping the other $600$ to train the proxy model for E$^2$\method. The proxy model is trained for $20$ epochs using AdamW with learning rates $0.01,0.001$ and $0.0001$ (and otherwise default Pytorch hyperparameters).

We implemented DoReMi for finding mixture weights using the same transformer architecture as above, with the reference model being trained on a balanced dataset. The DoReMi reference and proxy models were again trained for $20$ epochs using AdamW with learning rate $0.01, 0.001$ and $0.0001$ (and otherwise default Pytorch hyperparameters). The learning rate for the mixture weights was $0.1$. We also implemented group DRO with the same architecture, with the same model weights optimizer and mixture weights learning rate $0.1$. Furthermore, we tested varying minibatch sizes for the training of all models ($50,100,200$), and number of steps used for group DRO ($150,300,600$). In Figure~\ref{fig:autoregressive} we reported the results for the hyperparameter settings with the lowest mean error for each method over $15$ trials of randomly generating markov chains according to the varying magnitudes. %

\subsection{Tabular Data}

For running Gaussian kernel density estimation, we used the Scott method for bandwidth selection~\cite{scott1979optimal}. For fitting the model on  E$^2$\method with data reuse mixture weights, the reference models used for E$^2$\method with data reuse on the Income dataset, and the baseline of balancing the dataset, we implemented XGBoost with depths $6,8,10$, number of trees $100,200,300$, and learning rates $0.01,0.1$ and reported the results for the hyperparameter setting with the best average accuracy over $5$ trials with random train-test splits. For CelebA, the oracle accuracies and the reference models used in E$^2$\method with data reuse were always trained using depth $8$, number of trees $200$, and learning rate $0.1$; we found hyperparameter sweeping the models on individual groups had marginal impact, and anyways would only improve our results (the baseline of balancing always has a hyperparameter sweep). For training on the MixMax mixture or balanced mixture, we reweighed the loss for XGBoost using the sample weight parameter: for MixMax it is set to $\lambda_p/|D_p|$ for every datapoint in group $p$, and for balanced it is set to $1/|D_p|$. After normalization this gives the desired weighting of each group.

For the ensembled model results, we took the hyperparameters used for the retraining experiments and ran them for a separate $5$ random train-test splits to evaluate the performance of ensembling the proxy model.

\section{Additional Tables and Figures}
\label{app:tables_figs}

\begin{table*}[h!]
    \centering
    \begin{tabular}{c c| c | c c} 
     \toprule
      Datasets & Features & Oracle ($\%$) & Balanced ($\%$) & E$^2$\method ($\%$) \\
     \midrule
      \multirow{3}{*}{Inc-Race} & All & $79.1 (\pm 1.2)$ & $80.0 (\pm 1.2)$ & $80.3 (\pm 0.6)$ 
      \\
      & $2$ & $71.2 (\pm 0.1)$ & $70.7 (\pm 0.2)$ & $70.6 (\pm 0.3)$ 
      \\
      & $1$ & $60.0 (\pm 0.1)$ & $53.7 (\pm 0.4)$ & $53.7 (\pm 0.5)$ 
      \\
      \hline
      \multirow{3}{*}{Inc-Sex} & All & $81.2 (\pm 0.1)$ & $81.1 (\pm 0.2)$ & $81.1 (\pm 0.1)$
      \\
      & $2$ & $72.8 (\pm 0.1)$ & $71.4 (\pm 0.2)$ & $71.6 (\pm 0.1)$
      \\
      & $1$ & $62.8 (\pm 0.2)$ & $56.5 (\pm 0.2)$ & $\mathbf{58.1 (\pm 1.8)}$
      \\
      \hline
      \multirow{3}{*}{CelebA-Young} & All & $77.0 (\pm 0.1)$ & $75.6 (\pm 0.2)$ & $\mathbf{76.0 (\pm 1.0)}$
      \\
      & $10$ & $68.3 (\pm 0.3)$ & $62.5 (\pm 0.3)$ & $\mathbf{64.8 (\pm 1.6)}$
      \\
      & $5$ & $63.4 (\pm 0.3)$ & $53.8 (\pm 0.2)$ & $\mathbf{57.0 (\pm 0.6)}$
      \\
      \hline
      \multirow{3}{*}{CelebA-Pale Skin} & All & $78.2 (\pm 0.3)$ & $77.3 (\pm 0.2)$ & $77.1 (\pm 0.6)$
      \\
      & $10$ & $67.6 (\pm 0.1)$ & $66.8 (\pm 0.2)$ & $\mathbf{67.1 (\pm 0.3)}$
      \\
      & $5$ & $62.4 (\pm 0.2)$ & $59.5 (\pm 0.3)$ & $59.4 (\pm 0.3)$
      \\
     \bottomrule
    \end{tabular}
    \caption{ E$^2$\method with data reuse improved worst group accuracy over the baseline of balancing data for tabular dataset using XGBoost, in particular when more significant label shifts were present (see Figure~\ref{fig:MixMax_corr} for a visualization). We present the average minimum accuracy over the groups from $5$ trials with random training-test splits for E$^2$\method with data reuse, balancing by up-sampling, and the "oracle" accuracy where a model was trained for each group and is evaluated on the same group (as an example of optimal performance). Results are {\bf bolded} if the better method's average performance was outside one standard deviation of the other method. We modified the datasets to study a variety of shifts by only including the first $N$ features. %
    }
    \label{tab:tabular_exp}
\end{table*}

\begin{table*}[h!]
    \centering
    \begin{tabular}{c c| c | c c } 
     \toprule
      Datasets & Features & Oracle ($\%$) & \multicolumn{1}{p{3cm}}{\centering Retrained \\ E$^2$\method ($\%$)} & \multicolumn{1}{p{3cm}}{\centering Ensemble \\ E$^2$\method ($\%$)} \\
     \midrule
     \multirow{3}{*}{Inc-Race} & All & $79.1 (\pm 1.2)$ & $\mathbf{80.3 (\pm 0.6)}$ & $76.2 (\pm 0.4)$
      \\
      & $2$ & $71.2 (\pm 0.1)$ & $70.6 (\pm 0.3)$ & $70.4 (\pm 0.3)$ 
      \\
      & $1$ & $60.0 (\pm 0.1)$ & $\mathbf{53.7 (\pm 0.5)}$ & $53.3 (\pm 0.3)$
      \\
      \hline
      \multirow{3}{*}{Inc-Sex} & All & $81.2 (\pm 0.1)$ & $\mathbf{81.1 (\pm 0.1)}$ & $76.6 (\pm 0.2)$
      \\
      & $2$ & $72.8 (\pm 0.1)$ & $\mathbf{71.6 (\pm 0.1)}$ & $69.8 (\pm 1.2)$
      \\
      & $1$ & $62.8 (\pm 0.2)$ & $\mathbf{58.1 (\pm 1.8)}$ & $55.9 (\pm 0.1)$
      \\
      \hline
      \multirow{3}{*}{CelebA-Young} & All & $77.0 (\pm 0.1)$ & $76.0 (\pm 1.0)$ & $76.0 (\pm 0.6)$
      \\
      & $10$ & $68.3 (\pm 0.3)$ & $64.8 (\pm 1.6)$ & $62.6 (\pm 2.7)$
      \\
      & $5$ & $63.4 (\pm 0.3)$ & $57.0 (\pm 0.6)$ & $56.5 (\pm 1.6)$
      \\
      \hline
      \multirow{3}{*}{CelebA-Pale Skin} & All & $78.2 (\pm 0.3)$ & $77.1 (\pm 0.6)$ & $75.9 (\pm 0.6)$
      \\
      & $10$ & $67.6 (\pm 0.1)$ & $67.1 (\pm 0.3)$ & $67.2 (\pm 0.2)$
      \\
      & $5$ & $62.4 (\pm 0.2)$ & $59.4 (\pm 0.3)$ & $58.0 (\pm 1.1)$
      \\
     \bottomrule
    \end{tabular}
    \caption{Using E$^2$\method with data reuse mixture weights to train a new model performs better than ensembling the proxy models (following line 2 in Algorithm~\ref{alg:dro_ds}). We present the settings from Table~\ref{tab:tabular_exp}, and similarly report the average minimum accuracy over the groups from $5$ trials with random training-test splits for both methods of using \method mixture weights. Results are {\bf bolded} if the better method's average performance was outside one standard deviation of the other method.
    }
    \label{tab:tabular_proxy_exp}
\end{table*}

\begin{table*}[h!]
    \centering
    \begin{tabular}{c c c c} 
     \toprule
      Datasets & Features & Balanced & E$^2$\method \\
     \midrule
      \multirow{3}{*}{Inc-Race} & All & $0.426 (\pm 0.013)$ & $0.422 (\pm 0.007)$ 
      \\
      & $2$ & $0.557 (\pm 0.001)$ & $0.559 (\pm 0.003)$ 
      \\
      & $1$ & $0.707 (\pm 0.004)$ & $\mathbf{0.688 (\pm 0.020)}$ 
      \\
      \hline
      \multirow{3}{*}{Inc-Sex} & All & $0.403 (\pm 0.004)$ & $0.405 (\pm 0.005)$
      \\
      & $2$ & $0.544 (\pm 0.002)$ & $\mathbf{0.541 (\pm 0.006)}$
      \\
      & $1$ & $0.616 (\pm 0.001)$ & $\mathbf{0.611 (\pm 0.003)}$
      \\
      \hline
      \multirow{3}{*}{CelebA-Young} & All & $0.495 (\pm 0.002)$ & $\mathbf{0.488 (\pm 0.015)}$
      \\
      & $10$ & $0.644 (\pm 0.002)$ & $\mathbf{0.614 (\pm 0.019)}$
      \\
      & $5$ & $0.706 (\pm 0.002)$ & $\mathbf{0.670 (\pm 0.007)}$
      \\
      \hline
      \multirow{3}{*}{CelebA-Pale Skin} & All  & $0.465 (\pm 0.001)$ & $0.468 (\pm 0.006)$
      \\
      & $10$ & $0.594 (\pm 0.001)$ & $\mathbf{0.588 (\pm 0.005)}$
      \\
      & $5$ & $0.647 (\pm 0.001)$ & $\mathbf{0.641 (\pm 0.005)}$
      \\

     \bottomrule
    \end{tabular}
    \caption{E$^2$\method with data reuse improved worst group loss over the baseline of balancing data for tabular dataset using XGBoost. We present the average maximum loss over the groups from $5$ trials with random training-test splits for E$^2$\method with data reuse and balancing by up-sampling. Results are {\bf bolded} if the better method's average performance was outside one standard deviation of the other method. We modified the datasets to study a variety of shifts by only including the first $N$ features. %
    \label{tab:tabular_exp_loss}}
\end{table*}

\begin{table*}[h!]
    \centering
    \begin{tabular}{c c | c c } 
     \toprule
     Dataset & Features & Untuned E$^2$\method & Tuned E$^2$\method  \\
     \midrule
      Inc-Race & $1$ & $53.7 (\pm 0.5\%)$ & $\mathbf{54.3 (\pm 0.6\%)}$  
      \\
      Inc-Sex & $1$ & $58.1 (\pm 1.8\%)$ & $58.4 (\pm 1.8\%)$ 
      \\
      \multirow{2}{*}{CelebA-Young} & $10$ & $64.8\% (\pm 1.6\%)$ & $66.0\% (\pm 2.1\%)$
      \\
      & $5$ & $57.0\% (\pm 0.6\%)$ & $57.0\% (\pm 0.6\%)$
      \\
      CelebA-Pale Skin & $5$ & $59.4\% (\pm 0.3\%)$ & $59.4\% (\pm 0.3\%)$
      \\
     \bottomrule
    \end{tabular}
    \caption{ %
    E$^2$\method with data reuse can sometimes be improved by doing a hyperparameter search over number of optimization steps ($20,40,60,80$). We present the settings from Table~\ref{tab:tabular_exp} where the gap between oracle accuracy and the balanced accuracy was more than $2\%$, and include the average minimum accuracy over groups over $5$ trials of E$^2$\method with data reuse with further hyperparameter tuning. We \textbf{bolded} results where the tuned performance was outside one standard deviation of untuned. %
    }
    \label{tab:tabular_tuned_exp}
\end{table*}

\begin{figure}[h!]
\centering
\begin{subfigure}[t]{0.5\textwidth}
    \centering
    \includegraphics[scale = 0.35]{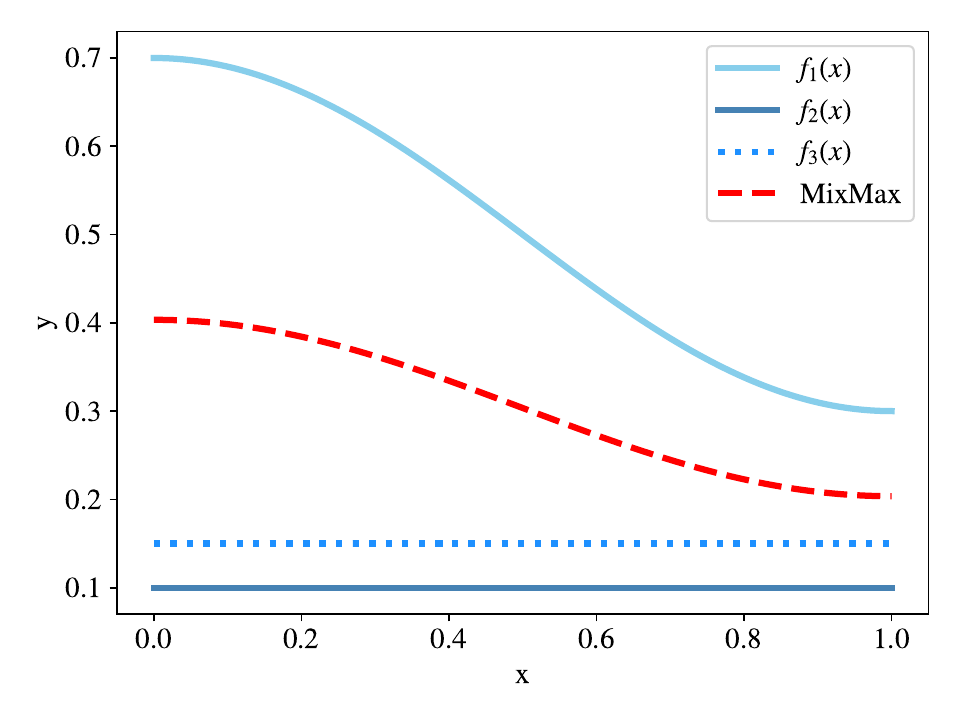}
    \caption{Extreme Functions Example 1}
    \label{fig:l2_case_1}
\end{subfigure}%
\begin{subfigure}[t]{0.5\textwidth}
    \centering
    \includegraphics[scale = 0.35]{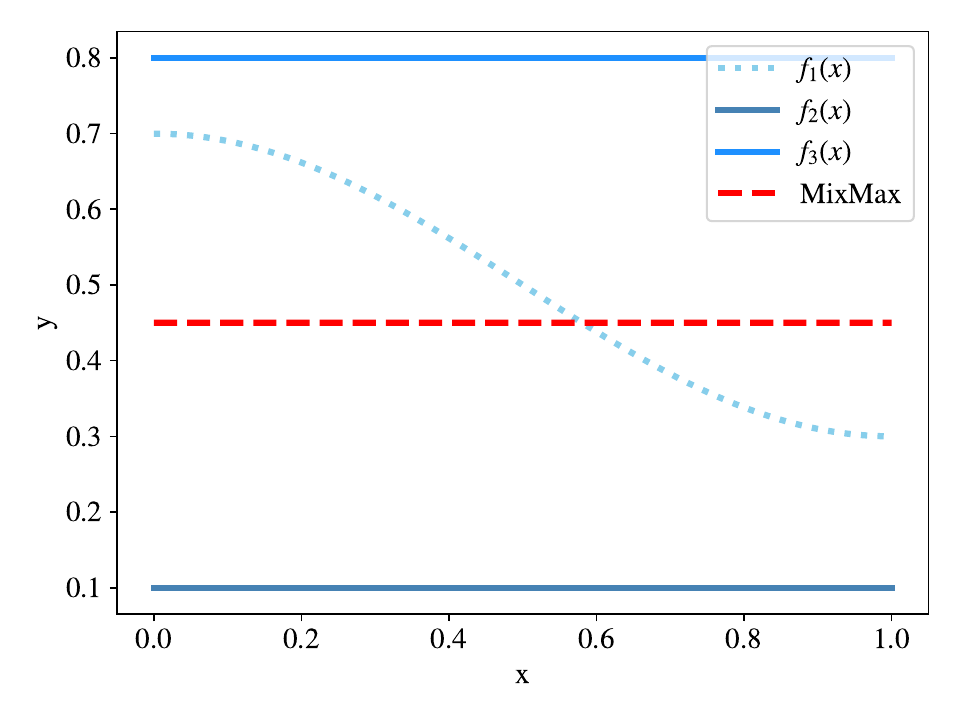}
    \caption{Extreme Functions Example 2}
    \label{fig:l2_case_2}
\end{subfigure}%
\caption{
E\method for deterministic regression finds the average of the extreme functions. In general \method maximizes expected variance and here we plot the expected $y$ of the E\method mixture.
}
\label{fig:toy_CE}
\end{figure}

\begin{figure}[h!]
\centering
\begin{subfigure}[t]{0.5\textwidth}
    \centering
    \includegraphics[scale = 0.4]{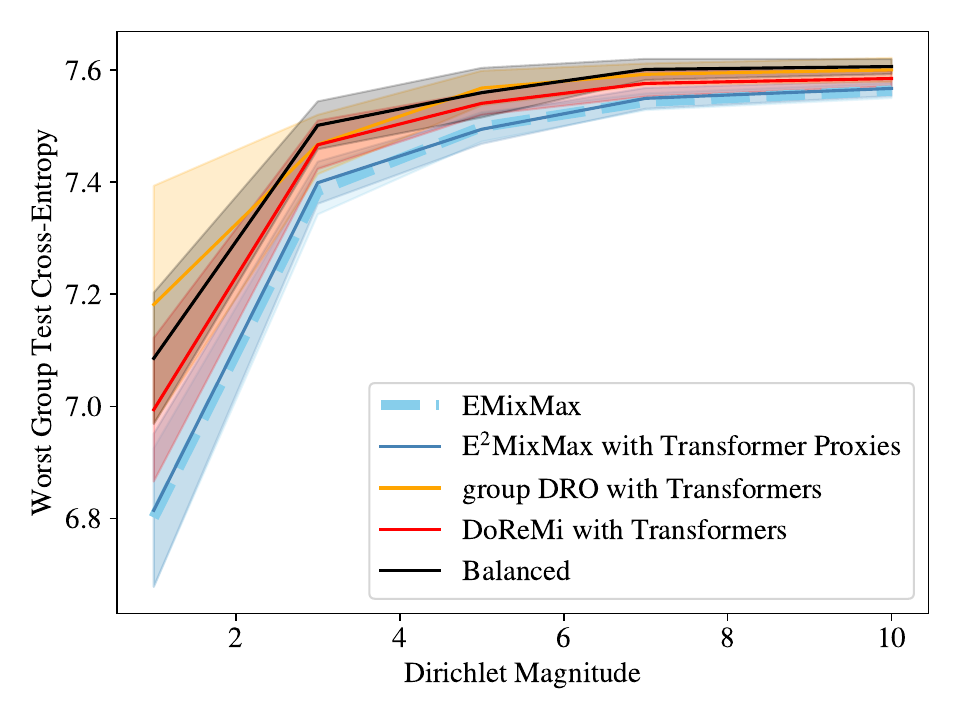}
    \caption{Optimally Fitting Mixture}
    \label{fig:autoregressive_opt_fewer}
\end{subfigure}%
\begin{subfigure}[t]{0.5\textwidth}
    \centering
    \includegraphics[scale = 0.4]{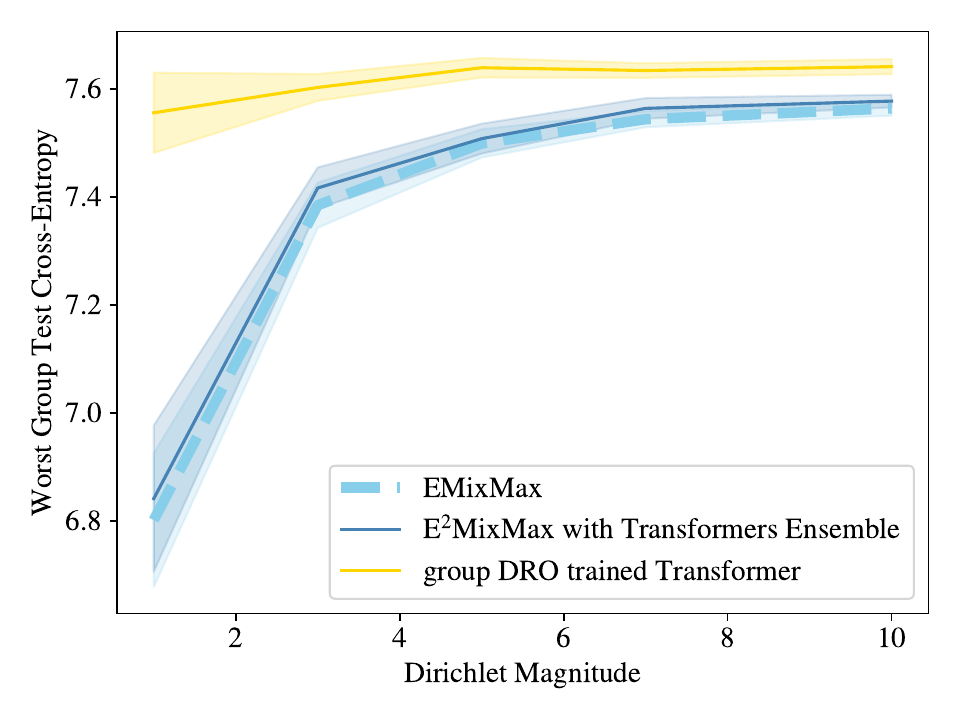}
    \caption{Empirically Fitting Mixture}
    \label{fig:autoregressive_emp_fewer}
\end{subfigure}%
\caption{A reproduction of the experimental setup for Figure~\ref{fig:autoregressive_fewer} but with fewer training samples ($500$ instead of $800$). We still see that a model (empirically by ensembling transformers or optimally) fitted to E$^2$\method mixture weights performs better than DoReMi and group DRO mixture weights, and the model trained by group DRO. Furthermore, we still see that the improvement is even stronger when the distributions are less similar.}
\label{fig:autoregressive_fewer}
\end{figure}

\begin{figure}[h!]
    \centering
    \includegraphics[scale  = 0.4]{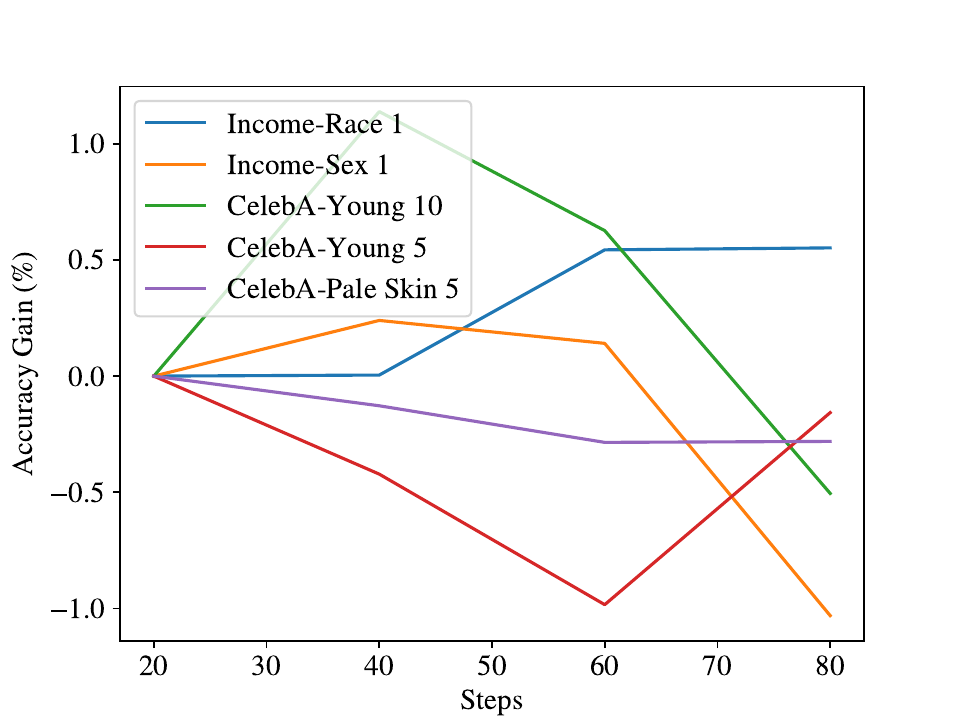}
    \caption{We see that E$^2$\method with data reuse can benefit from early stopping for several datasets, as seen by the peaks in performance, however these trends are statistically weak given the standard deviations reported in Table~\ref{tab:tabular_tuned_exp}. Here we plot the worst group accuracy gain over running E$^2$\method with data reuse for $20$ steps when running E$^2$\method with data reuse for $20,40,60,80$ steps. Further hyperparameter details are described in Appendix~\ref{app:exp_setup}. This is done for the datasets observed to have big label shifts, described in Table~\ref{tab:tabular_exp}}
    \label{fig:mixmax_n_steps}
\end{figure}

\begin{figure}[h!]
    \centering
    \begin{subfigure}[]{0.5\textwidth}
    \centering
    \includegraphics[scale = 0.4]{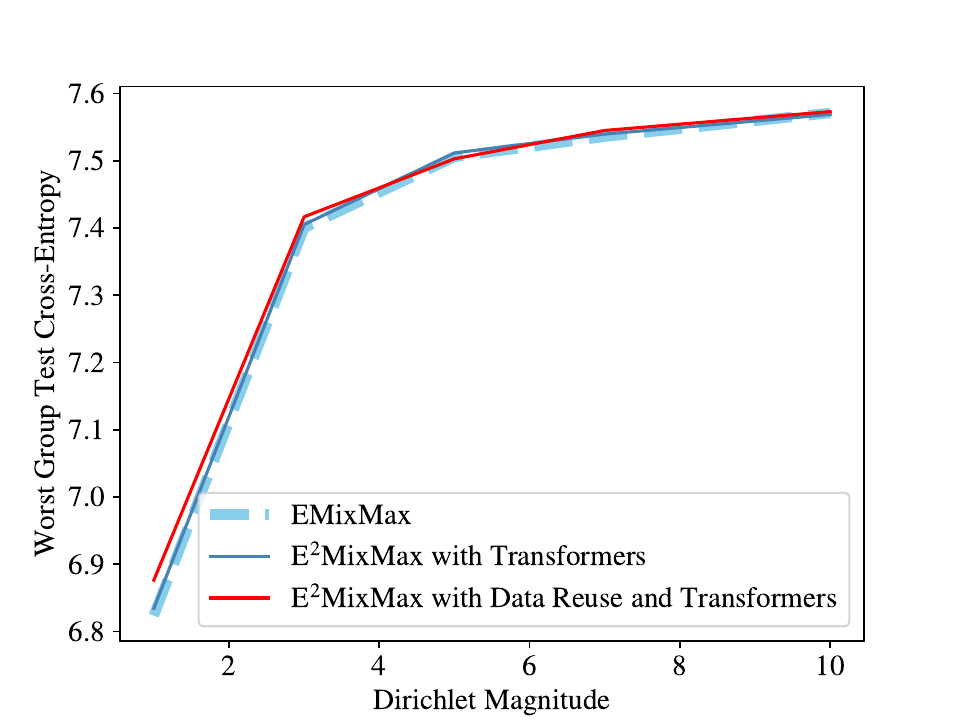}
    \caption{$800$ Training Samples}
    \label{fig:autoregressive_opt_biased}
    \end{subfigure}%
    \begin{subfigure}[]{0.5\textwidth}
        \centering
        \includegraphics[scale = 0.4]{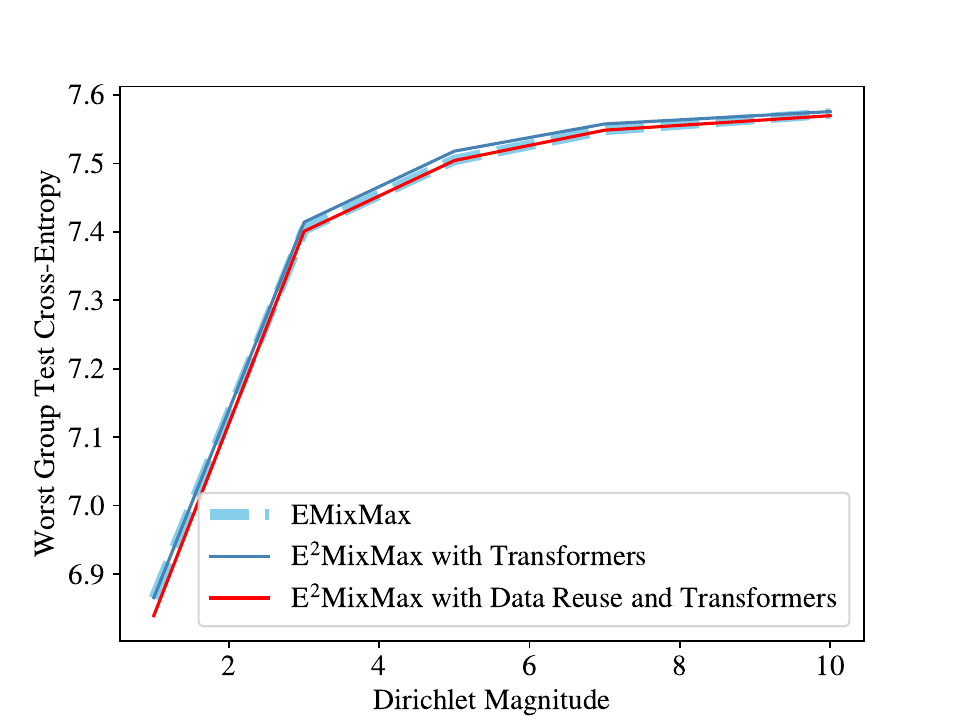}
        \caption{$500$ Training Samples}
        \label{fig:autoregressive_opt_biased_fewer}
    \end{subfigure}%
    \caption{E$^2$\method with data reuse is not much worse than E$^2$\method, and can be slightly better when one has fewer training samples. Here we consider the experimental setup from Figure~\ref{fig:autoregressive_opt} for fitting the various \method mixture weights optimally, and present results for when the same $800$ or $500$ training points are used to fit the proxy models and run E\method giving the data reuse version (instead of the $600-200$ and $300-200$ split for proxy training and \methodnospace).}
    \label{fig:autoregressive_biased}
\end{figure}

\begin{figure}[h!]
    \centering
    \includegraphics[scale=0.4]{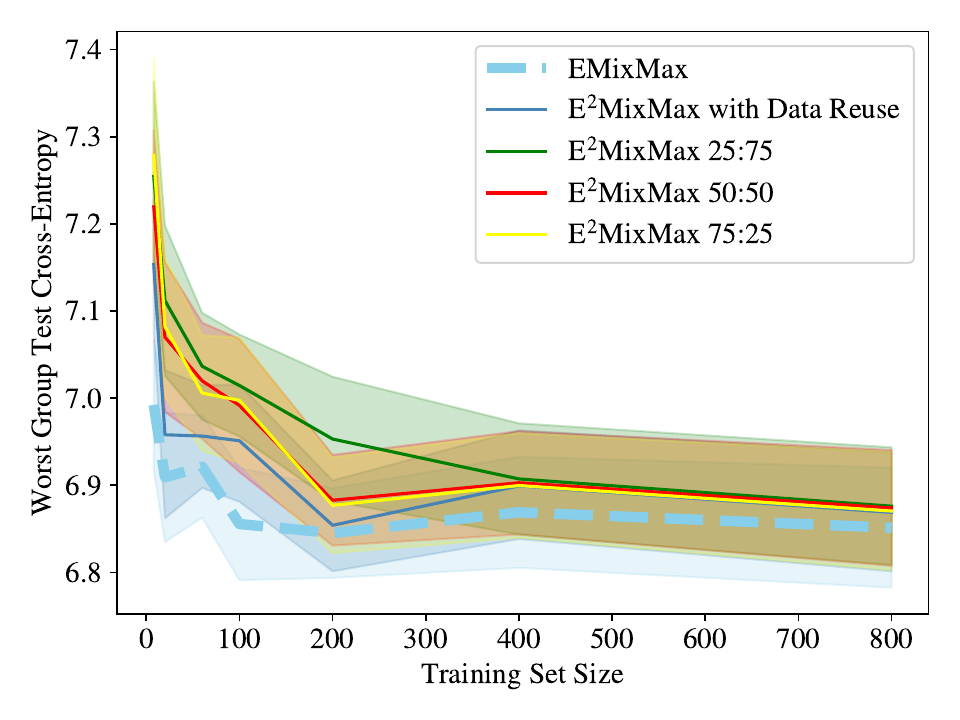}
    \caption{Various empirical \method approaches perform comparably for worst group cross-entropy, with E$^2$\method with data reuse performing closest to E\method} (with ground truths) in the small training set regime. Here we plot the mean and $95\%$ confidence interval (over $45$ trials of sampling new sets of Markov-chains from the symmetric Dirichlet distribution with magnitude $1$) of the worst group cross-entropy of the Bayes optimal function defined by the methods's mixture weights, changing the number of training samples per length.
    \label{fig:MixMax_comp}
\end{figure}

\begin{figure}[h!]
    \centering
    \includegraphics[scale=0.4]{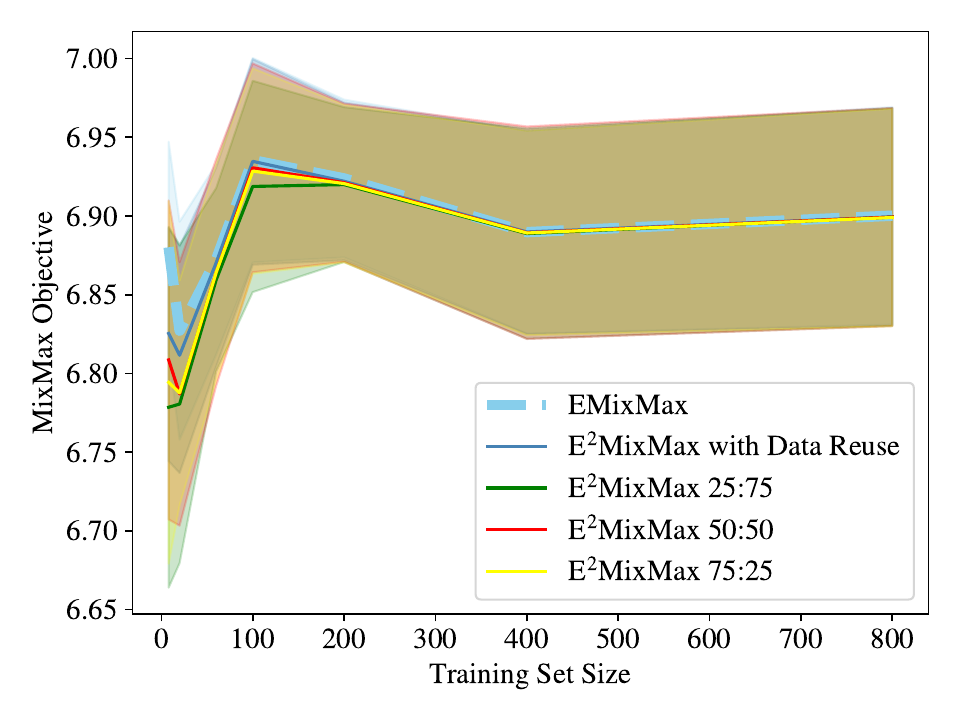}
    \caption{Various empirical \method approaches perform comparably in maximizing the MixMax objective, with E$^2$\method with data reuse performing closest to E\method (which uses ground truths) in the small training set regime. Here we plot the mean and $95\%$ confidence interval (over $45$ trials of sampling new sets of Markov-chains from the symmetric Dirichlet distribution with magnitude $1$) of the MixMax objective value for the methods's mixture weights, changing the number of training samples per length.}
    \label{fig:MixMax_obj_comp}
\end{figure}

\clearpage
\newpage

\end{document}